\documentclass[letterpaper]{article} 
\usepackage[preprint]{aaai2027}  
\usepackage[hyphens]{url}  
\usepackage{graphicx} 
\def\UrlFont{\rm}  
\usepackage{natbib}  
\usepackage{caption} 
\usepackage{algorithm}
\usepackage{algorithmic}

\usepackage{newfloat}
\usepackage{listings}
\DeclareCaptionStyle{ruled}{labelfont=normalfont,labelsep=colon,strut=off} 
\floatstyle{ruled}
\newfloat{listing}{tb}{lst}{}
\floatname{listing}{Listing}

\usepackage{booktabs}

\usepackage{xcolor}         
\usepackage{colortbl}       
\usepackage{multirow}       
\usepackage{placeins}
\usepackage{amsthm}
\usepackage{amsmath}
\usepackage{amssymb}

\newtheorem{proposition}{Proposition}

\usepackage{arydshln}

\title{Making Every Step Count: Spatio-Temporal Information Allocation for Imaging Inverse Problems}
\author{
    Yi Cao\textsuperscript{\rm 1},
    Xiangyong Cao\textsuperscript{\rm 1}\corresponding,
    Pei Liu\textsuperscript{\rm 1},
    Yong-Jin Liu\textsuperscript{\rm 2},
    Deyu Meng\textsuperscript{\rm 1}
}
\affiliations{
    \textsuperscript{\rm 1}Xi'an Jiaotong University\\
    \textsuperscript{\rm 2}Tsinghua University
}

\begin{document}

\maketitle

\begin{abstract}
Flow-based generative models have emerged as powerful image priors for training-free inverse problem solving, capturing coherent semantics and fine-grained structure. Despite these strengths, existing flow-based inverse solvers primarily focus on the design of individual updates, largely overlooking spatio-temporal information allocation under a fixed number of function evaluations (NFEs). Temporally, insufficient early exploration can trap the flow trajectory in an incorrect semantic basin, whereas excessive allocation of NFEs to early stages leaves little budget for late-stage refinement. Spatially, data consistency provides direct constraints only within observed regions, whereas the recovery of missing regions relies mainly on the generative prior. To address these two issues, we introduce two complementary and training-free components, i.e., Spectrum-Adaptive Scheduling (SAS) and Measurement-Prioritized Attention (MPA). For temporal allocation, SAS distributes the available NFEs over flow time according to the degradation spectrum and logSNR geometry, thus better balancing semantic exploration and detail refinement. For spatial propagation, MPA exploits data-prior conflicts to guide information toward weakly constrained regions, thereby enhancing semantic and structural fidelity. Extensive experiments on standard image inverse problems, e.g., super-resolution, motion deblurring, and inpainting, demonstrate that the proposed components can be integrated into existing flow-based inverse solvers in a plug-and-play manner without retraining or additional flow-model evaluations, and can also significantly improve the restoration quality of existing solvers. 
\end{abstract}


\section{Introduction}

\begin{figure}[!t]
    \centering
    \includegraphics[width=0.85\linewidth]{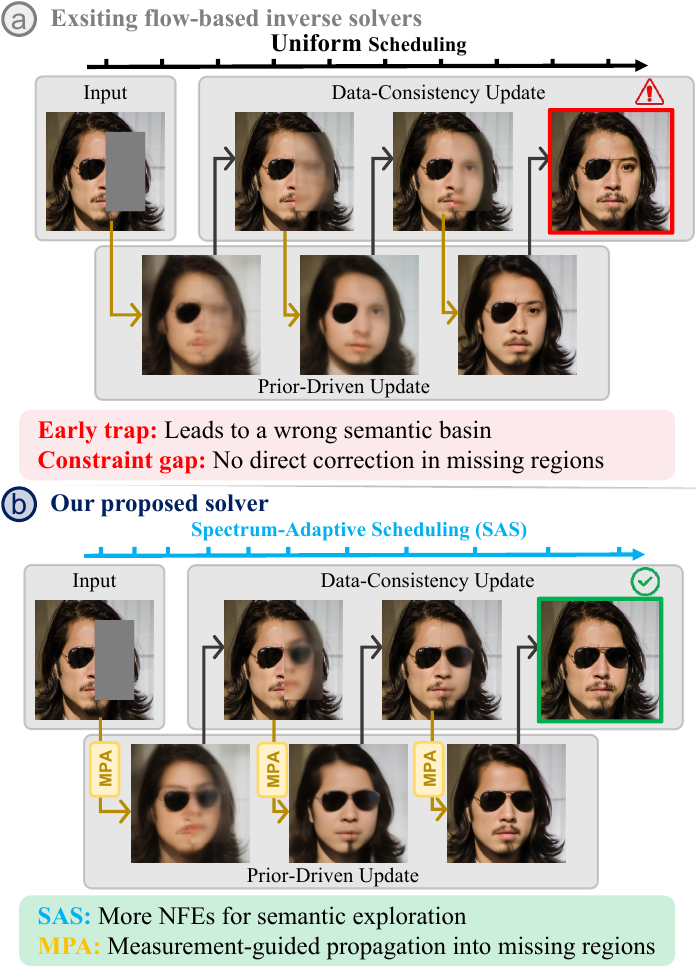}
    \caption{Comparison of existing solvers and our method.
    (a) Existing solvers adopt uniform scheduling, potentially limiting early exploration, while data-consistency updates constrain only observed regions, leaving missing regions largely dependent on the generative prior. (b) Our method contains two main components, i.e., SAS and MPA. SAS allocates NFEs based on operator demand to balance semantic exploration and detail refinement, while MPA strengthens measurement guidance in weakly constrained regions.}
    \label{fig:motivation-overview}
\end{figure}

Flow-based generative
models~\cite{lipman2023flow,liu2023flow,albergo2023stochastic}, which underpin
large-scale text-to-image systems such as Stable
Diffusion~3~\cite{esser2024sd3}, have
demonstrated remarkable image-generation quality and consequently emerged as
powerful priors for ill-posed imaging inverse problems. Accordingly, recent training-free solvers combine a pretrained flow-matching
model with a known measurement operator at inference time, avoiding
task-specific retraining for super-resolution, deblurring, or inpainting.
Within this paradigm, existing methods employ diverse strategies, including
posterior guidance, variational optimization, Langevin--proximal refinement,
and feasible-set projection
~\cite{kim2025flowdps,erbach2025flair,park2026flowlps,pourya2025flower}.
Collectively, these approaches have achieved strong restoration performance
across a range of inverse problems. Nevertheless, under a finite NFE budget,
their designs still center on how each individual update is performed,
while much less attention has been paid to their temporal placement along the
flow trajectory. The spatial propagation of measurement information is
similarly overlooked: data consistency directly constrains only observed
regions, leaving missing regions predominantly dependent on the generative
prior.

In practice, existing solvers typically adopt a uniform timestep schedule at a
fixed NFE budget, treating stages from semantic formation to detail refinement
equally. Yet different degradation operators require different balances of
these stages.
Inpainting provides the clearest example, as illustrated in
Figure~\ref{fig:motivation-overview}(a). Insufficient early-stage
exploration can trap the trajectory in an incorrect semantic basin, causing
structures such as sunglass frames to be omitted from the completion.
Meanwhile, data consistency directly constrains only observed pixels and offers
no direct correction inside the missing region, which therefore remains
predominantly dependent on the generative prior. Other operators also exhibit
distinct information deficits. High-factor super-resolution leaves
high-frequency details unobserved, whereas motion deblurring preserves coarse
semantics but weakly observes many modes. These operator-specific deficits
place different demands on how a finite NFE budget should be allocated, while
unobserved or weakly observed details receive little direct correction from
data consistency.


These observations suggest that flow inversion under a finite NFE budget is not only an update-design problem, but also a problem of temporal budget allocation and spatial information propagation. \textit{Temporally}, the goal is to allocate the NFE budget based on degradation-specific demand, balancing semantic exploration and detail refinement; \textit{spatially}, it is to
strengthen measurement guidance for recovery in weakly constrained regions. To address these challenges, we introduce Spectrum-Adaptive Scheduling (SAS) for NFE budget allocation and Measurement-Prioritized Attention (MPA) for spatial information propagation. Figure~\ref{fig:motivation-overview}(b) illustrates the complementary design, which improves restoration quality without retraining or additional flow-model evaluations.

SAS determines how a
finite NFE budget should be distributed along flow time according to the
degradation spectrum and logSNR geometry. The spectrum distinguishes missing
modes that rely more strongly on prior-driven synthesis from attenuated modes
that retain partial measurement support. LogSNR characterizes how the demands for semantic
exploration and detail refinement change along flow time. Based on the
degradation spectrum and logSNR geometry, SAS defines an operator demand and
constructs an operator-specific schedule, placing more timesteps in high-demand
portions of the trajectory while keeping the total NFE budget unchanged.
Meanwhile, MPA directly modulates image-token self-attention within the DiT
backbone to strengthen measurement guidance in weakly constrained regions. This design builds on the observation that the prior--data discrepancy
highlights structures poorly captured by the current prior. MPA summarizes this discrepancy as a conflict heatmap and transforms it through conflict-guided bias generation into an image-token attention bias, which reshapes the attention weights to improve structural fidelity in weakly constrained regions.

Our contributions are as follows:
\begin{itemize}
    \item We identify a shared bottleneck in training-free flow inverse solvers and formulate it in terms of the temporal allocation of a finite NFE budget and the spatial information propagation. 

    \item We introduce SAS and MPA to address the temporal and spatial
    dimensions, respectively. SAS distributes the NFE budget
    over flow time according to the degradation spectrum and logSNR geometry,
    while MPA uses prior--data conflicts to strengthen measurement guidance in
    weakly constrained regions.

    \item Both components can be integrated into existing solvers without
    retraining or additional flow-model evaluations. Experiments on
    super-resolution, motion deblurring, and inpainting demonstrate consistent improvements in
    restoration quality and better preservation of instance-specific semantic
    and structural details.
\end{itemize}

\section{Background and Related Work}
\label{sec:related}

\subsection{Flow Priors for Inverse Problems}
We consider an imaging inverse problem
\begin{equation}
    y=\mathcal{A}x+\nu ,
\end{equation}
where $\mathcal{A}$ is the degradation operator and $\nu$ is measurement
noise. Because $\mathcal A$ is generally non-invertible or ill-conditioned, recovery requires a natural-image prior in addition to data fidelity. Diffusion-based methods have established pretrained generative models as broadly applicable priors: RePaint propagates observed-region constraints through resampling, DPS injects likelihood gradients into reverse diffusion, DDRM and DDNM exploit the spectrum or null space of a linear operator, and DAPS decouples noise annealing to make early errors easier to correct~\cite{lugmayr2022repaint,chung2023dps,kawar2022ddrm,wang2023ddnm,zhang2024daps}.

With recent advances in flow matching, pretrained flow models have also been
adopted as generative priors for inverse problems. A common affine probability
path is
\begin{equation}
    x_t=a_t x_0+b_t\epsilon,
    \qquad
    \epsilon\sim\mathcal N(0,I),
    \label{eq:affine_path}
\end{equation}
whose signal-to-noise state is commonly parameterized by the log
signal-to-noise ratio (logSNR)~\cite{kingma2021variational}:
\begin{equation}
    \ell(t)=\log\frac{a_t^2}{b_t^2}.
    \label{eq:general_logsnr}
\end{equation}
For the linear path used in this work, $a_t=1-t$ and $b_t=t$. A velocity
prediction $v_\theta(x_t,t)$ then induces
\begin{equation}
    \hat x_{0|t}=x_t-tv_\theta(x_t,t),\qquad
    \hat\epsilon_{|t}=x_t+(1-t)v_\theta(x_t,t).
    \label{eq:endpoint_predictions}
\end{equation}
These endpoint estimates provide convenient interfaces for likelihood,
projection, or proximal corrections while retaining the pretrained flow
trajectory. FlowDPS applies posterior guidance through these endpoint
predictions~\cite{kim2025flowdps}; FLAIR alternates a variational flow
regularizer with hard data consistency~\cite{erbach2025flair}; FlowLPS
combines Langevin exploration with proximal
refinement~\cite{park2026flowlps}; and FLOWER alternates flow-consistent
denoising with measurement-aware projection~\cite{pourya2025flower}.
Despite their different formulations, these solvers all combine prior-driven
and data-consistency updates at a sequence of flow timesteps. Under a finite
inference budget, however, they largely overlook where these updates should be
placed along flow time and how measurement guidance can be strengthened in
weakly constrained regions.

\subsection{Timestep Scheduling}
Sampling schedules determine how a limited evaluation budget is distributed
along a generative trajectory. EDM shows that noise parameterization and
timestep placement should be designed together, with logSNR providing a
natural coordinate for the progression from noise-dominant synthesis to
clean-dominant refinement~\cite{karras2022edm}. More recent training-free methods design the
grid itself: TORS allocates timesteps according to the geometry of the
sampling trajectory~\cite{zhou2026analyzing}, while Lipschitz-guided schedules
use numerical regularity to resolve difficult
intervals~\cite{chen2025lipschitz}. These schedules are designed primarily
around the generative trajectory or its numerical properties and do not
account for the degradation-specific demands of inverse problems. SAS instead
combines the degradation spectrum with logSNR geometry to construct an
operator-specific schedule that allocates the fixed timestep budget according
to restoration demand.

\subsection{Attention-Based Spatial Guidance}
Attention control offers a complementary means of guiding spatial
information. Prompt-to-Prompt and Attend-and-Excite manipulate
text-conditioned attention, while MasaCtrl and Plug-and-Play Diffusion
Features reuse reference attention or intermediate
features~\cite{hertz2022prompt,chefer2023attend,cao2023masactrl,tumanyan2023plug}.
ControlNet and T2I-Adapter instead learn additional conditioning
modules~\cite{zhang2023adding,mou2024t2i}. These methods rely on prompts,
reference features, or learned conditioning and may require extra
optimization, feature injection, or task-specific modules. Existing inverse
solvers, meanwhile, use measurements primarily for data-fidelity correction,
and rarely use them to guide image-token attention within the generative
backbone. MPA instead converts prior--data conflicts into an image-token
attention bias to strengthen measurement guidance in weakly constrained
regions, without additional flow-model evaluations. Together, SAS and MPA
make more effective use of a finite inference budget through operator-specific
temporal allocation and stronger measurement guidance in weakly constrained
regions.

\section{Methodology}
\label{sec:method}

\begin{figure}
    \centering
    \includegraphics[width=\linewidth]{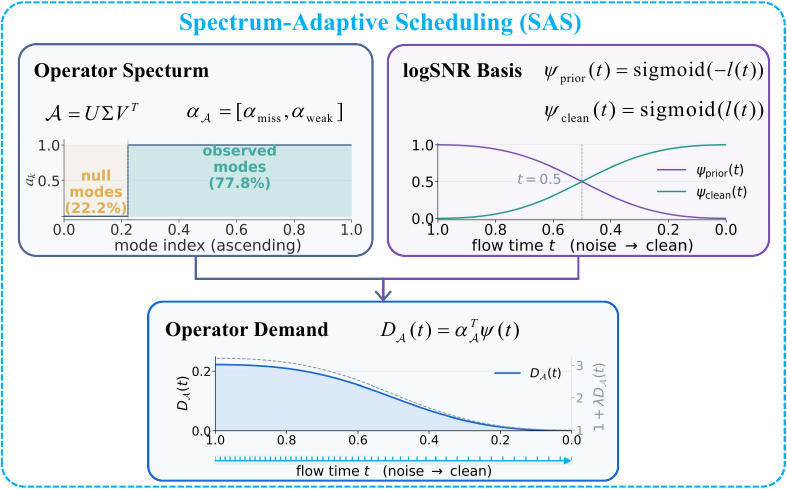}
    \caption{Overview of Spectrum-Adaptive Scheduling (SAS). SAS combines the
    operator spectrum and logSNR-derived temporal bases into an operator demand,
    which is used to distribute the finite NFE budget over flow time.}
    \label{fig:sas-overview}
\end{figure}
\subsection{Overview}
This section details how SAS and MPA address temporal budget allocation and spatial information propagation, respectively, under a finite NFE budget. SAS determines \emph{when} inverse updates are executed by distributing timesteps over flow time according to degradation spectrum and logSNR geometry. MPA strengthens measurement guidance in weakly constrained regions by converting prior--data conflicts into image-token attention bias within the DiT backbone.

\subsection{Spectrum-Adaptive Scheduling}
\label{sec:osta}

As illustrated in Figure.~\ref{fig:sas-overview}, SAS constructs an
operator-specific schedule by quantifying missing and
attenuated spectral modes, mapping their demands over flow time through
logSNR, and allocating the fixed timestep budget by equal-mass quantiles.

\paragraph{Operator spectrum.}
For measurements with $y=\mathcal A x_0+\nu$, the standard quadratic data-consistency objective is
\begin{equation}
    \mathcal L_{\rm dc}(x)
    =
    \frac12\|\mathcal A x-y\|_2^2,
    \label{eq:osta_dc_objective}
\end{equation}
whose Hessian is
\begin{equation}
    \nabla_x^2\mathcal L_{\rm dc}(x)
    =
    \mathcal A^\top\mathcal A.
    \label{eq:osta_dc_hessian}
\end{equation}
Hence, for any unit signal direction $v$, the quadratic form
$v^\top\mathcal A^\top\mathcal A v=\|\mathcal A v\|_2^2$ measures how strongly
it is constrained by the measurement. Moreover, under isotropic Gaussian noise,
$\mathcal A^\top\mathcal A$ is also proportional to the Fisher information
matrix. However, these directions are generally coupled in image coordinates.
Following DDRM~\cite{kawar2022ddrm}, we therefore decouple them via the SVD:
\begin{equation}
    \mathcal A
    =
    U\Sigma V^\top,
    \qquad
    \Sigma
    =
    \operatorname{diag}(a_1,\ldots,a_d),
    \label{eq:osta_svd}
\end{equation}
where $U$ and $V$ provide orthonormal bases for the measurement and image
spaces, respectively, and $\Sigma$ is rectangular diagonal. Let
$r=\operatorname{rank}(\mathcal A)$, with
$a_1\geq\cdots\geq a_r>0=a_{r+1}=\cdots=a_d$. Then
\begin{equation}
    \mathcal A^\top\mathcal A
    =
    V\operatorname{diag}(a_1^2,\ldots,a_d^2)V^\top.
    \label{eq:osta_gram_svd}
\end{equation}
Thus, $v_k$ is an independent signal direction and $a_k^2$ is the
measurement strength along that direction.


We normalize $a_k$ as $\bar a_k=a_k/a_{\max}$, where
$a_{\max}=\|\mathcal A\|_2$, and obtain
\begin{equation}
    G_{\mathcal A}
    :=
    \frac{\mathcal A^\top\mathcal A}{\|\mathcal A\|_2^2}
    =
    V\operatorname{diag}(\bar a_1^2,\ldots,\bar a_d^2)V^\top.
    \label{eq:osta_normalized_gram}
\end{equation}
By construction, each $\bar a_k^2\in[0,1]$ measures how strongly the $k$-th direction is observed relative to the strongest direction.

The ordinary rank $r$ counts all nonzero singular directions equally,
including those that are severely attenuated. We instead use the stable
rank~\cite{tropp2015matrix,cohen2016stable}:
\begin{equation}
    \operatorname{srank}(\mathcal A)
    :=
    \frac{\|\mathcal A\|_F^2}{\|\mathcal A\|_2^2}
    =
    \operatorname{tr}(G_{\mathcal A})
    =
    \sum_{k=1}^d\bar a_k^2.
    \label{eq:osta_stable_rank}
\end{equation}
Stable rank is a scale-invariant effective dimension: a direction contributes
according to its relative measurement strength rather than through a binary
zero/nonzero decision. For every nonzero operator,
\begin{equation}
    1
    \leq
    \operatorname{srank}(\mathcal A)
    \leq
    r
    \leq
    d,
    \label{eq:osta_stable_rank_bounds}
\end{equation}

Relative to full observation, the normalized stable-rank gap separates as
\begin{equation}
    \frac{d-\operatorname{srank}(\mathcal A)}{d}
    =
    \underbrace{\frac{d-r}{d}}_{\alpha_{\rm miss}}
    +
    \underbrace{
    \frac{r-\operatorname{srank}(\mathcal A)}{d}
    }_{\alpha_{\rm weak}},
    \label{eq:spectral_statistics}
\end{equation}
where $\alpha_{\rm miss}$ is the fraction of unobservable directions, while
$\alpha_{\rm weak}$ aggregates attenuation over the observable directions.

\paragraph{LogSNR temporal bases.}
The spectral coefficients above summarize the operator's missing and
attenuated modes. To translate this information into temporal allocation, we
first characterize how the role of an update changes along flow time. At the
noise-dominant early stage, restoration relies more strongly on the prior for
semantic synthesis; toward the clean endpoint, measurement-supported details
can be refined more effectively. We describe this transition using the logSNR
$\ell(t)=\log(a_t^2/b_t^2)$~\cite{kingma2023understanding}, which provides a
natural coordinate for the progression from the noise-dominant stage to the
clean endpoint.

We map logSNR to two complementary temporal bases:
\begin{equation}
    \begin{aligned}
        \psi_{\rm prior}(t)
        &=
        \operatorname{sigmoid}(-\ell(t))
        =
        \frac{b_t^2}{a_t^2+b_t^2},
        \\
        \psi_{\rm clean}(t)
        &=
        \operatorname{sigmoid}(\ell(t))
        =
        \frac{a_t^2}{a_t^2+b_t^2}
    \end{aligned}
    \label{eq:logsnr_bases}
\end{equation}
For the linear flow path $a_t=1-t$ and $b_t=t$, they reduce to
\begin{equation}
    \begin{aligned}
    \psi_{\rm prior}(t)
    &=
    \frac{t^2}{t^2+(1-t)^2},\\
    \psi_{\rm clean}(t)
    &=
    \frac{(1-t)^2}{t^2+(1-t)^2}.
    \end{aligned}
\end{equation}
Here, the two bases are nonnegative and satisfy
$\psi_{\rm prior}(t)+\psi_{\rm clean}(t)=1$:
$\psi_{\rm prior}$ emphasizes early prior-driven exploration, whereas
$\psi_{\rm clean}$ gradually shifts emphasis toward clean-stage detail
refinement.

\paragraph{From operator demand to an adaptive schedule.}
With the degradation spectrum summarized by the operator-spectrum weights and
the restoration demand along flow time characterized by the temporal bases,
we define the operator demand as
\begin{equation}
    D_{\mathcal A}(t)
    :=
    \alpha_{\rm miss}\psi_{\rm prior}(t)
    +
    \alpha_{\rm weak}\psi_{\rm clean}(t).
    \label{eq:osta_profile}
\end{equation}
The pairing reflects the different measurement support of missing and
attenuated modes. Missing modes receive no direct measurement correction and hence predominantly rely on prior-driven synthesis, whereas attenuated modes retain partial measurement support and can benefit more from clean-stage refinement. The resulting $D_{\mathcal A}(t)$ summarizes the operator-specific restoration demand along the flow time and serves as the temporal weighting profile for constructing the adaptive schedule. 

Let $\mathcal T=[t_{\min},t_{\max}]$. We convert this demand directly into a
normalized allocation density:
\begin{equation}
    q_{\mathcal A,\lambda}(t)
    =
    \frac{1+\lambda D_{\mathcal A}(t)}
         {\int_{t_{\min}}^{t_{\max}}
          [1+\lambda D_{\mathcal A}(u)]\,du},
    \qquad \lambda\geq0,
    \label{eq:osta_density}
\end{equation}
where $\lambda$ controls the deviation from uniform allocation, with
$\lambda=0$ recovering the uniform schedule. Given $N$ solver intervals, we
take reverse-time equal-mass quantiles of $q_{\mathcal A,\lambda}$:
\begin{equation}
    \begin{aligned}
        &F_{\mathcal A,\lambda}(t)
    :=
    \int_{t_{\min}}^t q_{\mathcal A,\lambda}(u)\,du,
    \\
    t_i
    &=
    F_{\mathcal A,\lambda}^{-1}
    \left(1-\frac{i}{N}\right),
    \quad i=0,\ldots,N.
    \end{aligned}
    \label{eq:osta_schedule}
\end{equation}
Thus, $t_0=t_{\max}$ and $t_N=t_{\min}$, while portions of the trajectory with
larger operator demand receive denser solver times. Overall, SAS converts the
degradation spectrum and logSNR geometry into an operator-specific schedule,
allowing the fixed timestep budget to be allocated more effectively between
semantic exploration and detail refinement.

\subsection{Measurement-Prioritized Attention}
\label{sec:mgga}

\begin{figure}
    \centering
    \includegraphics[width=\linewidth]{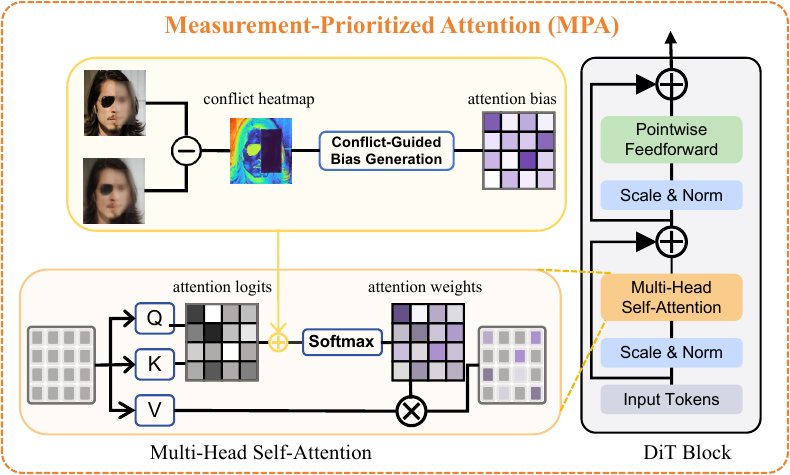}
    \caption{Overview of Measurement-Prioritized Attention (MPA). MPA
    summarizes prior--data conflicts as a conflict heatmap and converts it into
    an image-token attention bias within the DiT backbone, strengthening
    measurement guidance in weakly constrained regions.}
    \label{fig:mpa-overview}
\end{figure}
As illustrated in Figure~\ref{fig:mpa-overview}, MPA strengthens measurement guidance in weakly constrained regions by directly modulating image-token self-attention within the DiT backbone. It first extracts spatial conflicts from the prior--data discrepancy and then converts them into an attention bias that reshapes information flow between image tokens.

\begin{figure}[t]
    \centering
    \includegraphics[width=\linewidth]{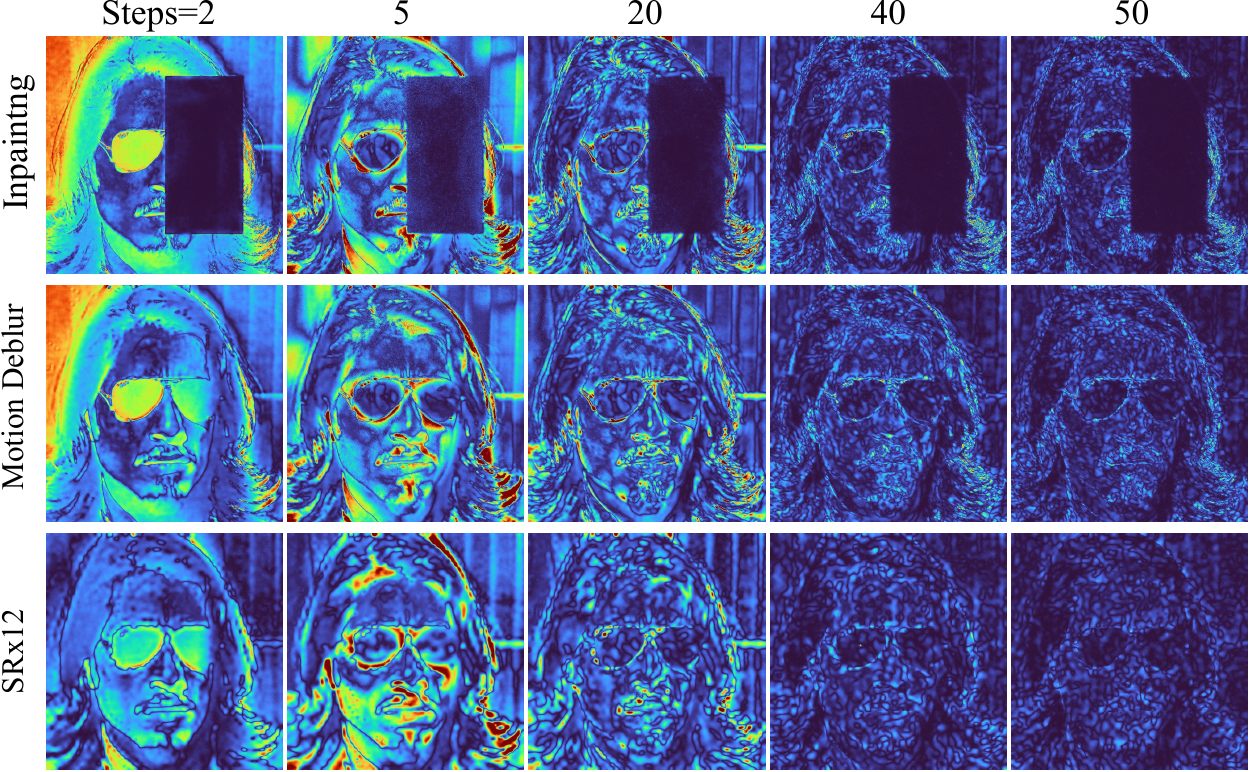}
    \caption{Data-consistency conflict heatmaps over the restoration trajectory
    for inpainting, motion deblurring, and $\times12$ super-resolution. Brighter
    values indicate larger corrections between the prior and data-consistent
    estimates.} 
    \label{fig:conflict_heatmap}
\end{figure}

Let $z_t^{\rm pri}$ and $z_t^{\rm dc}$ denote the latent estimates produced by the prior and data-consistency updates at step $t$, respectively. Their token-wise discrepancy measures the correction introduced by data consistency relative to the current prior estimate and we use its magnitude and summarize it as the conflict heatmap
\begin{equation}
    c_t
    =
    \operatorname{NormPool}
    \left(\left|z_t^{\rm dc}-z_t^{\rm pri}\right|\right)
    \in[0,1]^n,
    \label{eq:conflict_heatmap}
\end{equation}
where $\operatorname{NormPool}$ performs channel pooling, robust normalization, and spatial smoothing. Figure~\ref{fig:conflict_heatmap} visualizes how these
conflicts evolve across solver steps. At early stages, strong responses already
delineate major structural contours and highlight high-frequency details that
the prior tends to suppress. Toward the clean endpoint, the conflicts diminish
and become increasingly concentrated on fine textures. These patterns provide spatial evidence for the subsequent attention modulation.

Suppressing the attention-head index for clarity, let $Q_t,K_t\in\mathbb R^{n\times d_h}$ denote the image-token queries and keys, where $d_h$ is the feature dimension of each head. The original image-token attention logits are
\begin{equation}
    L_t
    =
    \frac{Q_tK_t^\top}{\sqrt{d_h}}
    \in\mathbb R^{n\times n}.
    \label{eq:image_attention_logits}
\end{equation}
The rows of $L_t$ correspond to query tokens and its columns correspond to key tokens. These logits capture feature compatibility between image tokens but do not explicitly account for prior--data conflicts.

To match the pairwise attention logits, MPA broadcasts the token-wise map along the query dimension, giving the attention bias
\begin{equation}
    B_t=\mathbf 1 c_t^\top\in\mathbb R^{n\times n},
    \label{eq:mgga_bias}
\end{equation}
where $\mathbf 1\in\mathbb R^n$ broadcasts $c_t$ across all image-token queries, assigning larger key-side biases to locations with stronger measurement responses. MPA then adds $B_t$ to the original image-token attention logits $L_t$ before softmax:
\begin{equation}
    \widetilde P_t
    =
    \operatorname{Softmax}_{\rm row}
    \left(L_t+\beta_t B_t\right),
    \label{eq:mgga_attention}
\end{equation}
where $\beta_t$ controls the bias strength. The resulting attention weights place greater emphasis on measurement-responsive tokens, thereby strengthening measurement guidance in weakly constrained regions.


\begin{table*}[!t]
\centering
\begingroup
\setlength{\tabcolsep}{2.5pt}
\setlength{\arrayrulewidth}{0.7pt}
\renewcommand{\arraystretch}{1.2}
\arrayrulecolor{black}
\fontsize{8pt}{10pt}\selectfont
\begin{tabular}{l|cccc|cccc|cccc|cccc}
\hline
\multicolumn{1}{c|}{\multirow{2}{*}{\textbf{Method}}}
& \multicolumn{4}{c|}{\textbf{SR $8\times$}}
& \multicolumn{4}{c|}{\textbf{SR $12\times$}}
& \multicolumn{4}{c|}{\textbf{Motion Deblur}}
& \multicolumn{4}{c}{\textbf{Inpainting}} \\
\cline{2-5}\cline{6-9}\cline{10-13}\cline{14-17}
& PSNR$\uparrow$ & SSIM$\uparrow$ & FID$\downarrow$ & LPIPS$\downarrow$
& PSNR$\uparrow$ & SSIM$\uparrow$ & FID$\downarrow$ & LPIPS$\downarrow$
& PSNR$\uparrow$ & SSIM$\uparrow$ & FID$\downarrow$ & LPIPS$\downarrow$
& PSNR$\uparrow$ & SSIM$\uparrow$ & FID$\downarrow$ & LPIPS$\downarrow$ \\
\hline
\multicolumn{17}{c}{\textbf{FFHQ 1k}} \\
\hline
FlowChef
& 27.33 & 0.755 & 41.93 & 0.340
& 26.13 & 0.726 & 58.99 & 0.376
& 27.41 & 0.756 & 36.99 & 0.339
& 18.96 & 0.769 & 65.30 & 0.407 \\
FlowDPS
& 27.52 & 0.702 & 29.13 & 0.419
& 26.62 & 0.691 & 31.54 & 0.453
& 25.93 & 0.681 & 40.58 & 0.485
& 19.98 & 0.761 & 40.84 & 0.336 \\
\hline
FlowLPS
& 24.20 & 0.424 & 53.05 & 0.609
& 23.53 & 0.405 & 65.59 & 0.660
& 33.18 & 0.871 & 31.01 & 0.247
& 21.92 & 0.863 & 17.31 & 0.215 \\
\textbf{FlowLPS + Ours}
& \textbf{26.67} & \textbf{0.611} & \textbf{36.03} & \textbf{0.429}
& \textbf{26.09} & \textbf{0.627} & \textbf{39.281} & \textbf{0.462}
& 33.10 & \textbf{0.872} & \textbf{29.47} & 0.251
& \textbf{24.55} & \textbf{0.868} & 20.86 & 0.230 \\
\hline
FLAIR
& 29.41 & 0.796 & 83.66 & 0.468
& 26.68 & 0.712 & 58.75 & 0.381
& 32.04 & 0.830 & 11.26 & 0.158
& 23.76 & 0.849 & 14.63 & 0.170 \\
\textbf{FLAIR + Ours}
& \textbf{29.61} & \textbf{0.803} & 85.50 & \textbf{0.449}
& \textbf{26.81} & \textbf{0.712} & \textbf{52.65} & \textbf{0.346}
& \textbf{32.10} & 0.829 & \textbf{10.80} & \textbf{0.155}
& \textbf{24.49} & \textbf{0.852} & 15.28 & \textbf{0.169} \\
\hline
\multicolumn{17}{c}{\textbf{DIV2K 0.8k}} \\
\hline
FlowChef
& 21.25 & 0.532 & 55.81 & 0.508
& 20.15 & 0.490 & 65.42 & 0.543
& 21.38 & 0.537 & 55.82 & 0.506
& 19.91 & 0.632 & 64.82 & 0.511 \\
FlowDPS
& 21.93 & 0.526 & 43.99 & 0.482
& 20.94 & 0.490 & 53.91 & 0.549
& 20.74 & 0.493 & 60.98 & 0.566
& 21.39 & 0.668 & 41.97 & 0.333 \\
\hline
FlowLPS
& 20.01 & 0.382 & 53.95 & 0.526
& 19.40 & 0.351 & 68.06 & 0.585
& 27.02 & 0.765 & 26.96 & 0.310
& 23.36 & 0.840 & 20.94 & 0.194 \\
\textbf{FlowLPS + Ours}
& 19.76 & \textbf{0.384} & 54.45 & \textbf{0.558}
& \textbf{19.42} & \textbf{0.379} & \textbf{65.76} & \textbf{0.598}
& \textbf{27.04} & \textbf{0.770} & \textbf{25.90} & \textbf{0.302}
& \textbf{24.19} & 0.829 & \textbf{20.86} & 0.213 \\
\hline
FLAIR
& 23.44 & 0.616 & 58.64 & 0.585
& 21.19 & 0.512 & 62.10 & 0.507
& 26.74 & 0.743 & 16.54 & 0.215
& 23.83 & 0.834 & 17.18 & 0.164 \\
\textbf{FLAIR + Ours}
& \textbf{23.51} & \textbf{0.622} & 59.17 & \textbf{0.573}
& \textbf{21.32} & \textbf{0.517} & \textbf{60.36} & \textbf{0.493}
& \textbf{26.87} & \textbf{0.745} & \textbf{15.89} & \textbf{0.207}
& \textbf{24.07} & 0.833 & \textbf{16.61} & 0.167 \\
\hline
\end{tabular}%
\endgroup
\caption{Quantitative results on FFHQ (1k) and DIV2K (0.8k) at
$768\times768$ resolution. All methods use the same number of function
evaluations. ``+ Ours'' augments the corresponding solver with SAS and MPA.}
\label{tab:main-results}
\end{table*}

\begin{figure*}[!h]
    \centering
    \includegraphics[width=1.0\linewidth]{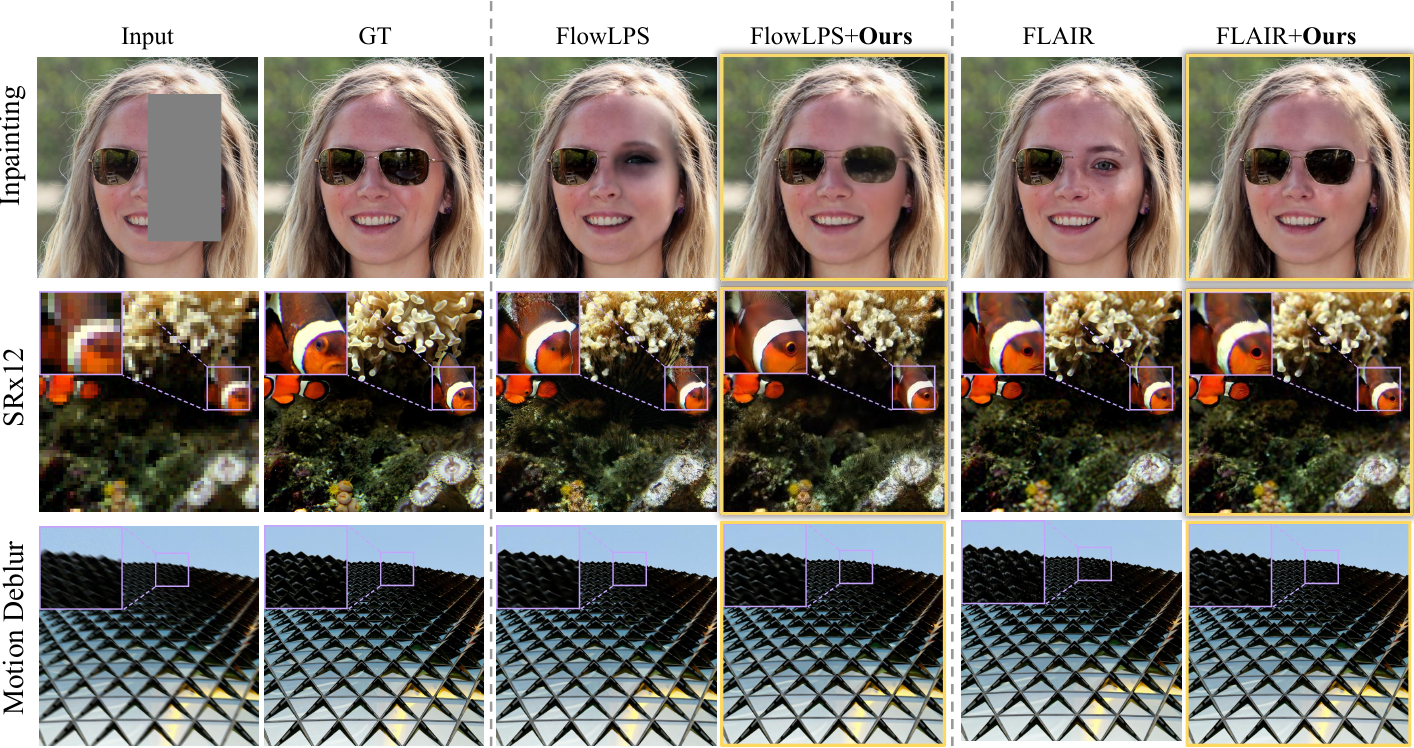}
    \caption{Qualitative comparison across inpainting, $\times12$
    super-resolution, and motion deblurring. Adding our components to FlowLPS
    or FLAIR improves semantic and structural fidelity while maintaining
    consistency with the observed content. Best viewed zoomed in.}
    \label{fig:main-results}
\end{figure*}

\section{Experiments}

\subsection{Setup}
\noindent\textbf{Datasets and Evaluation Metrics.}
We conducted comprehensive evaluations across multiple inverse problems using high-resolution images from two datasets: 1k images from FFHQ~\cite{karras2019stylegan} and 0.8k images from DIV2K~\cite{agustsson2017ntire}. All experiments were performed at $768\times768$ resolution by resizing the original images. Our evaluation employs PSNR and SSIM to quantify pixel-level reconstruction fidelity, complemented by FID and LPIPS for perceptual quality and naturalness.

\noindent\textbf{Baselines.}
We compare with FlowDPS~\cite{kim2025flowdps},
FlowChef~\cite{patel2025flowchef},
FLAIR~\cite{erbach2025flair}, and
FlowLPS~\cite{park2026flowlps}.
The FlowDPS and FlowChef results follow the SD3.0-Medium protocol reported b FlowDPS, whereas FLAIR and FlowLPS use their SD3.5-Medium implementations. Our main controlled comparisons integrate SAS and MPA into FLAIR and FlowLPS, denoted by ``+ Ours''; each augmented solver retains the inference settings
and number of flow-model evaluations of its corresponding baseline.

\noindent\textbf{Problem Setting.}
We run and evaluate all methods at a fixed output resolution of 768×768 pixels. For single image super-resolution, we consider scaling factors of 8× and 12×. The corresponding low-resolution inputs are generated by bicubic downsampling. Motion blur is simulated with a blur kernel of size 61. For box inpainting, we mask large, continuous rectangles that cover approximately one third of the observation. All synthesized observations are corrupted with additive Gaussian noise, with standard deviation $\sigma_v = 0.3\%$.

Following previous work on flow-based inverse problems~\cite{erbach2025flair,park2026flowlps}, we use the fixed prompt ''A high quality photo of a face'' for FFHQ. For DIV2K, we concatenate ''A high quality photo of'' with an image-specific description extracted from the observation using DAPE~\cite{wu2024seesr}.

\subsection{Main Results} 
Table~\ref{tab:main-results} reports the quantitative results. The gains are
most consistent for super-resolution and motion deblurring. Adding our modules
to FLAIR improves most metrics for $\times12$ super-resolution and motion
deblurring on both datasets. FlowLPS also obtains clear improvements on FFHQ
super-resolution.

For inpainting, the clearest quantitative gains occur in fidelity: PSNR improves for both augmented solvers. Figure~\ref{fig:main-results} further shows the qualitative advantage. The baselines often fail to recover instance-specific attributes, including sunglasses and facial markings, whereas our method preserves these structures consistently. The baselines use
the observed content mainly through data-consistency correction, leaving its
structural and semantic cues underused within the generative backbone; their
reconstructions can therefore drift toward a generic semantic basin favored by
the prior. Under the same NFE budget, SAS provides finer temporal resolution
at the noise end for early semantic exploration, while MPA uses
measurement-supported cues to guide spatial feature aggregation toward weakly
constrained regions. Our method also recovers finer textures in
super-resolution and leaves less residual blur in motion deblurring.



\subsection{Ablation Study}

\begin{table}[h]
    \centering
    \small
    \setlength{\tabcolsep}{6pt}
    \begin{tabular}{cccccc}
        \toprule
        SAS & MPA & PSNR $\uparrow$ & SSIM $\uparrow$ & LPIPS $\downarrow$ & FID $\downarrow$ \\
        \midrule
        $\times$     & $\times$     &24.34  &0.435  &0.595  &87.11  \\
        $\checkmark$ & $\times$     &\textbf{26.33}  &\underline{0.574}  &\underline{0.471}  &\textbf{72.94}  \\
        $\times$     & $\checkmark$ &24.41  &0.445  &0.585  &86.27  \\
        $\checkmark$ & $\checkmark$ &\underline{26.19}  &\textbf{0.576}  &\textbf{0.465}  &\underline{73.38}  \\
        \bottomrule
    \end{tabular}
    \caption{Ablation study for $\times12$ super-resolution on FFHQ
    (100 images). SAS and MPA are individually switched on or off. Bold:
    best; underline: second best.}
    \label{tab:ablation_components}
\end{table}
\noindent\textbf{Component Ablation.}
We evaluate the contribution of SAS and MPA in a representative controlled experimental setting based on FlowLPS for FFHQ $\times12$ super-resolution. All variants use the same 100-image subset and NFE budget. Table~\ref{tab:ablation_components} reports the results when SAS and MPA are enabled individually and jointly. SAS contributes most of the quantitative gain and improves all four metrics over the base schedule. MPA alone produces smaller changes in these aggregate metrics, consistent with its main role in preserving semantic and structural details.

Although SAS provides the largest improvements on aggregate metrics, its
semantic recovery remains sensitive to sampling randomness. Allocating more
steps to the noise end provides additional opportunities for early semantic
exploration, but does not guarantee that the trajectory enters the correct
semantic basin. Figure~\ref{fig:seed-stability} examines this limitation on the
same challenging box-inpainting case across different random seeds. The base
solver consistently converges to a generic face favored by the generative
prior, while SAS recovers the complete sunglasses only for some seeds. Changing
the seed can still lead to different semantic outcomes, showing that additional
early exploration alone does not ensure stable semantic recovery.

MPA complements this temporal allocation by acting as a semantic stabilizer.
By strengthening measurement guidance in weakly constrained regions, MPA
allows the missing region to make better use of measurement-supported
structure from the observed content. As shown in
Figure~\ref{fig:seed-stability}, MPA consistently recovers the complete
sunglasses across seeds. This stronger structural constraint may slightly
reduce visual quality in some cases, but it produces markedly more stable
semantic recovery, which is not fully reflected by aggregate metrics.
Combining SAS and MPA retains this semantic stability while improving visual
quality over MPA alone.

\begin{figure}[t]
    \centering
    \includegraphics[width=\linewidth]{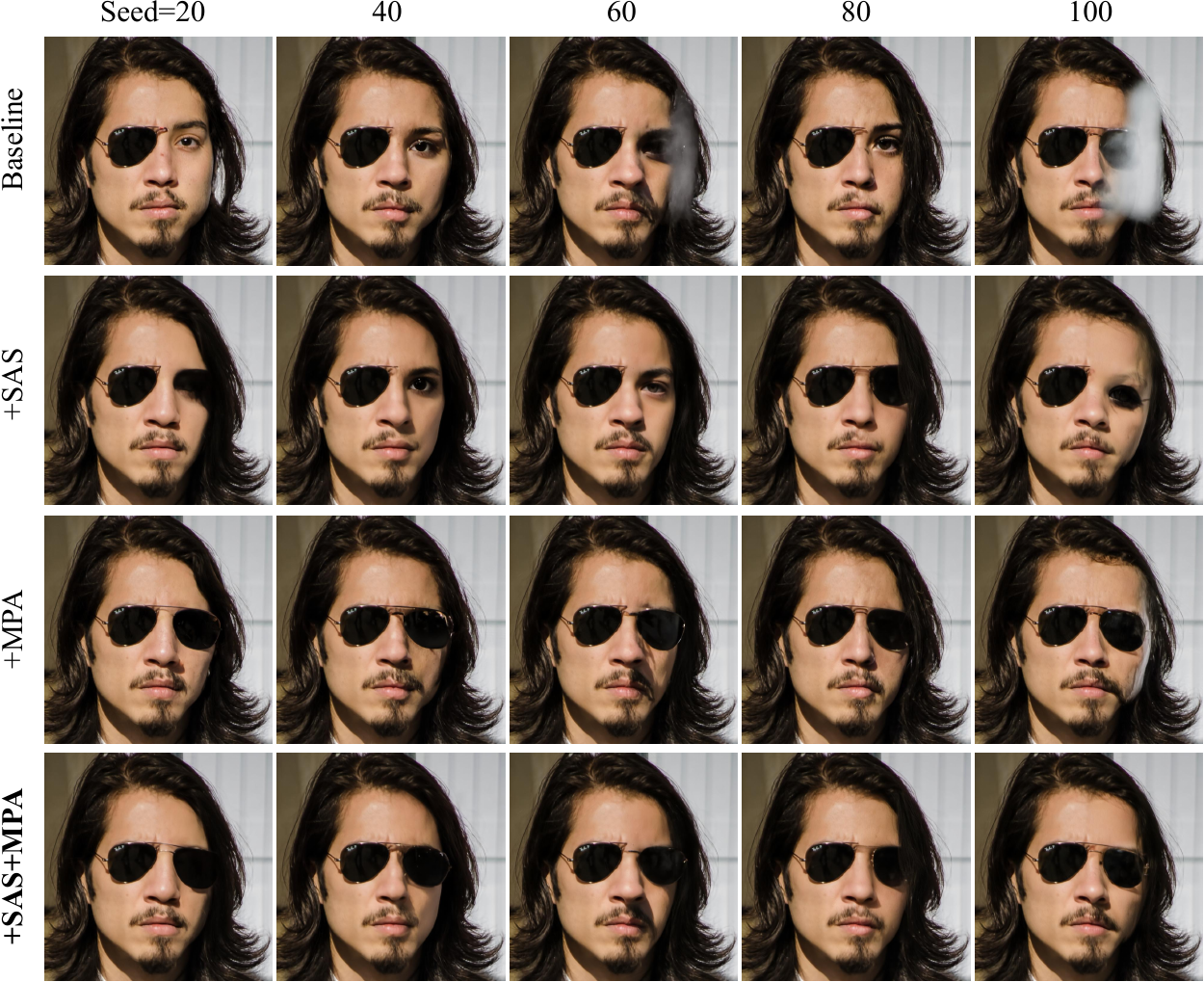}
    \caption{Box-inpainting results across random seeds. Columns correspond to
    different seeds, and rows show the Baseline, $+$SAS, $+$MPA, and
    $+$SAS$+$MPA settings.}
    \label{fig:seed-stability}
\end{figure}

\begin{table}[!h]
    \centering
    \small
    \setlength{\tabcolsep}{5pt}
    \begin{tabular}{lcccc}
        \toprule
        Schedule & PSNR $\uparrow$ & SSIM $\uparrow$ & LPIPS $\downarrow$ & FID $\downarrow$ \\
        \midrule
        Uniform                  & 26.39 & 0.573 & 0.463 & 73.07 \\
        Noise-end dense          & \underline{26.83} & \underline{0.616} & \underline{0.426} & \underline{68.87} \\
        Mid-trajectory dense     & 23.68 & 0.378 & 0.695 & 95.57 \\
        Signal-end dense         & 24.03 & 0.396 & 0.662 & 90.69 \\
        \textbf{SAS (Ours)}        & \textbf{26.89} & \textbf{0.620} & \textbf{0.420} & \textbf{67.99} \\
        \bottomrule
    \end{tabular}
    \caption{Schedule ablation for FlowLPS on FFHQ $8\times$
    super-resolution using 100 images and 50 NFEs.}
    \label{tab:ablation_schedule}
\end{table}

\noindent\textbf{Schedule Ablation.}
To evaluate whether the NFE distribution produced by SAS reflects
operator-specific demand over flow time, we conduct a controlled schedule
ablation with FlowLPS on FFHQ $8\times$ super-resolution. The experiment uses
a 100-image subset and the same 50-NFE budget for all schedules. All other
solver and evaluation settings remain fixed across schedules. We compare SAS
with a uniform schedule and three hand-crafted alternatives. Specifically, for
the hand-crafted alternatives, we divide the trajectory into three equal
segments and assign a larger share of the NFE budget to the noise end, middle,
or clean end, respectively.

Table~\ref{tab:ablation_schedule} shows that SAS performs best on all four
metrics, while the noise-end-dense schedule consistently ranks second. This
result confirms that allocating more steps to early semantic exploration is
beneficial, but also shows that concentrating timesteps in a manually selected
segment is insufficient. By distributing the same NFE budget according to
operator-specific demand, SAS better balances early semantic exploration and
late-stage detail refinement.

\section{Conclusion}
Existing training-free flow-based inverse solvers largely overlook temporal
budget allocation and spatial information propagation under a finite inference
budget. To address these limitations, we introduced Spectrum-Adaptive
Scheduling (SAS) and Measurement-Prioritized Attention (MPA). SAS combines
operator-spectrum weights with logSNR-derived temporal bases to allocate the
fixed timestep budget more effectively between semantic exploration and detail
refinement. MPA uses prior--data conflicts to bias image-token attention in the
DiT backbone, strengthening measurement guidance in weakly constrained
regions. Both components integrate into existing solvers without retraining or
additional inference steps. Experiments across super-resolution, motion
deblurring, and inpainting show consistent improvements in restoration quality
and preservation of instance-specific semantic and structural details. These
results highlight the value of spatio-temporal information allocation under
finite inference budgets.

\bibliography{aaai2027}

\clearpage
\def\ARXIVCOMBINED{}
\ifdefined\ARXIVCOMBINED
\else
\documentclass[letterpaper]{article} 
\usepackage[submission]{aaai2027} 
\usepackage[hyphens]{url} 
\usepackage{graphicx} 
\urlstyle{rm} 
\def\UrlFont{\rm} 
\usepackage{natbib} 
\usepackage{caption} 
\frenchspacing 

\usepackage{amsmath}
\usepackage{amssymb}
\usepackage{amsthm}
\usepackage{booktabs}

\newtheorem{proposition}{Proposition}
\newtheorem{lemma}{Lemma}
\newtheorem{corollary}{Corollary}
\newtheorem{remark}{Remark}

\pdfinfo{
/TemplateVersion (2027.1)
}

\title{Supplementary Material for\\
Making Every Step Count: Spatio-Temporal Information Allocation for Imaging Inverse Problems}
\author{Anonymous Submission}
\affiliations{}

\begin{document}

\maketitle
\fi

\appendix
\setcounter{secnumdepth}{1}

\section{Overview}

This supplement provides concise analysis and implementation details for SAS
and MPA. Sections~\ref{sec:sas-analysis} and~\ref{sec:mpa-analysis} relate
operator spectra and trajectory sensitivity to temporal allocation and
analyze conflict-guided attention, respectively. Sections~\ref{sec:implementation}
and~\ref{sec:add-exp} document solver settings and additional experiments.

\section{Additional Analysis of Spectrum-Adaptive Scheduling}
\label{sec:sas-analysis}

\subsection{Operator spectrum and data-consistency deficit}

Let $\mathcal A\in\mathbb R^{m\times d}$ have singular values
$a_1\geq\cdots\geq a_r>0=a_{r+1}=\cdots=a_d$ and rank $r$. With
$\bar a_k=a_k/a_1$ (and $\bar a_k=0$ for $k>r$), define
\begin{equation}
    G_{\mathcal A}
    =
    \frac{\mathcal A^\top\mathcal A}{\|\mathcal A\|_2^2}
    =
    V\operatorname{diag}(\bar a_1^2,\ldots,\bar a_d^2)V^\top.
    \label{eq:supp-normalized-gram}
\end{equation}
SAS uses
\begin{equation}
    \begin{aligned}
            \alpha_{\rm miss}=&\frac{d-r}{d},
    \quad
    \alpha_{\rm weak}
    =
    \frac{r-\operatorname{srank}(\mathcal A)}{d},
    \\
    &\operatorname{srank}(\mathcal A)=\sum_{k=1}^{r}\bar a_k^2.
    \end{aligned}
    \label{eq:supp-spectral-weights}
\end{equation}

\begin{proposition}[Exact decomposition of measurement deficit]
\label{prop:spectral-deficit}
The SAS spectral weights are nonnegative, invariant to any nonzero scalar
rescaling of $\mathcal A$, and satisfy
\begin{equation}
    \alpha_{\rm miss}+\alpha_{\rm weak}
    =
    \frac{1}{d}\operatorname{tr}(I-G_{\mathcal A})
    =
    \frac1d\sum_{k=1}^{d}(1-\bar a_k^2).
    \label{eq:mean-curvature-deficit}
\end{equation}
For isotropic $u$ with $\mathbb E[uu^\top]=I/d$,
\begin{equation}
    \mathbb E\!\left[u^\top(I-G_{\mathcal A})u\right]
    =
    \alpha_{\rm miss}+\alpha_{\rm weak}.
    \label{eq:isotropic-deficit}
\end{equation}
\end{proposition}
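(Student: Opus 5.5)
The argument is a direct computation in the right singular basis of $\mathcal A$. Throughout I take $\mathcal A\neq 0$, so that $a_1=\|\mathcal A\|_2>0$ and $G_{\mathcal A}$ is well defined. I handle the three claims in order: nonnegativity, scale invariance, and then the trace identity with its isotropic-expectation form.

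\emph{Nonnegativity.} Since $r\le d$, we get $\alpha_{\rm miss}=(d-r)/d\ge 0$. For $\alpha_{\rm weak}$, the ordering $a_1\ge\cdots\ge a_r$ gives $\bar a_k=a_k/a_1\in(0,1]$ for $k\le r$. Hence $\operatorname{srank}(\mathcal A)=\sum_{k=1}^r\bar a_k^2\le r$, so $\alpha_{\rm weak}\ge 0$; this is the middle inequality of the earlier bound on the stable rank.

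\emph{Scale invariance.} Replacing $\mathcal A$ by $c\mathcal A$ with $c\neq0$ keeps the right singular vectors $V$ (up to the sign absorbed in $U$) and maps $a_k\mapsto |c|a_k$. The rank $r$ is unchanged, and each ratio $\bar a_k=|c|a_k/(|c|a_1)$ is unchanged. Equivalently, $G_{c\mathcal A}=c^2\mathcal A^\top\mathcal A/(c^2\|\mathcal A\|_2^2)=G_{\mathcal A}$. Since both weights depend only on $r$, $d$ and the $\bar a_k$, they are invariant.

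\emph{Trace identity.} Using the diagonalization $G_{\mathcal A}=V\operatorname{diag}(\bar a_k^2)V^\top$ with $V$ orthogonal, I write $\operatorname{tr}(I-G_{\mathcal A})=d-\sum_{k=1}^d\bar a_k^2$. Because $\bar a_k=0$ for $k>r$, this sum equals $d-\operatorname{srank}(\mathcal A)$. Dividing by $d$ and splitting $d-\operatorname{srank}=(d-r)+(r-\operatorname{srank})$ gives exactly $\alpha_{\rm miss}+\alpha_{\rm weak}$. The second equality, $\frac1d\sum_k(1-\bar a_k^2)$, is just the same trace written termwise.

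\emph{Isotropic expectation.} I use linearity of trace and expectation:
\[
\mathbb E[u^\top M u]=\mathbb E[\operatorname{tr}(Muu^\top)]=\operatorname{tr}(M\,\mathbb E[uu^\top])=\tfrac1d\operatorname{tr}(M),
\]
with $M=I-G_{\mathcal A}$, which is deterministic. This only requires the second moment to exist, which is implied by $\mathbb E[uu^\top]=I/d$. Combined with the trace identity, this yields the isotropic-deficit equation.

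None of these steps is a real obstacle. The only care needed is bookkeeping at the two boundaries. First, zero singular values must be counted in the $d$-term sum, which is why the convention $\bar a_k=0$ for $k>r$ matters when $m<d$ and $\Sigma$ is rectangular. Second, $\mathcal A\neq0$ must be assumed so that the normalization by $\|\mathcal A\|_2$ makes sense.
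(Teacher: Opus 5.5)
Your proposal is correct and takes essentially the same route as the paper's proof. Both substitute the definitions to get $\alpha_{\rm miss}+\alpha_{\rm weak}=(d-\operatorname{srank}(\mathcal A))/d=\frac1d\operatorname{tr}(I-G_{\mathcal A})$, both invoke $\mathbb E[u^\top Mu]=\operatorname{tr}(M)/d$ for the isotropic identity, and both use the normalization by $a_1$ for scale invariance and nonnegativity. You simply make explicit details the paper leaves implicit: the bound $\bar a_k\le 1$, the identity $G_{c\mathcal A}=G_{\mathcal A}$, the convention $\bar a_k=0$ for $k>r$, and the standing assumption $\mathcal A\neq 0$.
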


\begin{proof}
Substituting the two definitions yields the sum in
Equation~\eqref{eq:mean-curvature-deficit}. The isotropic identity follows
from $\mathbb E[u^\top Mu]=\operatorname{tr}(M)/d$. Normalization by $a_1$
establishes scale invariance and nonnegativity.
\end{proof}

For $f(x)=\frac12\|\mathcal A x-\mathcal A x^\star\|_2^2$, a gradient step
of size $\|\mathcal A\|_2^{-2}$ gives
\begin{equation}
    e^+
    =
    (I-G_{\mathcal A})e,
    \qquad
    e_k^+
    =
    (1-\bar a_k^2)e_k
    \quad\text{in the right-singular basis.}
    \label{eq:dc-mode-contraction}
\end{equation}
Missing modes receive no correction, whereas attenuated modes contract
slowly; $\alpha_{\rm miss}$ and $\alpha_{\rm weak}$ separate these effects.

\subsection{LogSNR bases as endpoint sensitivities}

Consider the affine flow path
\begin{equation}
    x_t=a_t x_0+b_t\epsilon,
    \qquad
    v_t=\dot a_t x_0+\dot b_t\epsilon,
    \label{eq:supp-affine-system}
\end{equation}
with $\Delta_t=a_t\dot b_t-\dot a_t b_t\neq0$.

\begin{proposition}[Velocity-error sensitivity of endpoint estimates]
\label{prop:endpoint-sensitivity}
Perturbing the predicted velocity by $\delta v_t$ while holding $x_t$ fixed
gives endpoint errors
\begin{equation}
    \delta\widehat x_{0|t}
    =
    -\frac{b_t}{\Delta_t}\delta v_t,
    \qquad
    \delta\widehat\epsilon_{|t}
    =
    \frac{a_t}{\Delta_t}\delta v_t.
    \label{eq:supp-endpoint-error}
\end{equation}
whose normalized squared sensitivities are
\begin{equation}
    \frac{b_t^2}{a_t^2+b_t^2}
    =
    \operatorname{sigmoid}[-\ell(t)],
    \qquad
    \frac{a_t^2}{a_t^2+b_t^2}
    =
    \operatorname{sigmoid}[\ell(t)],
    \label{eq:supp-normalized-sensitivity}
\end{equation}
where $\ell(t)=\log(a_t^2/b_t^2)$.
\end{proposition}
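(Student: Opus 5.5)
The plan is to treat the affine path at a fixed time $t$ as a $2\times 2$ linear system in the unknowns $(x_0,\epsilon)$, solve it explicitly, and read off the endpoint estimators as linear functions of $(x_t,v_t)$. Concretely, \eqref{eq:supp-affine-system} gives
\begin{equation}
    \begin{pmatrix} x_t \\ v_t \end{pmatrix}
    =
    \begin{pmatrix} a_t & b_t \\ \dot a_t & \dot b_t \end{pmatrix}
    \begin{pmatrix} x_0 \\ \epsilon \end{pmatrix},
\end{equation}
acting componentwise, i.e.\ tensored with the identity on $\mathbb R^d$. Its determinant is exactly $\Delta_t=a_t\dot b_t-\dot a_t b_t\neq0$. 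Inverting yields
\begin{equation}
    \widehat x_{0|t}=\frac{\dot b_t x_t-b_t v_t}{\Delta_t},
    \qquad
    \widehat\epsilon_{|t}=\frac{-\dot a_t x_t+a_t v_t}{\Delta_t}.
\end{equation}
These formulas define the endpoint predictions once $v_t$ is replaced by the network output $v_\theta(x_t,t)$. As a sanity check, I would verify that for $a_t=1-t$, $b_t=t$ one has $\Delta_t=1$, and that the formulas reduce to \eqref{eq:endpoint_predictions}.

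Because the maps are affine in $v_t$ with $x_t$ held fixed, a perturbation $v_t\mapsto v_t+\delta v_t$ changes the estimates by exactly the $v_t$-coefficients. This gives $\delta\widehat x_{0|t}=-(b_t/\Delta_t)\,\delta v_t$ and $\delta\widehat\epsilon_{|t}=(a_t/\Delta_t)\,\delta v_t$, which is \eqref{eq:supp-endpoint-error}. No linearization error arises.

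For the normalized squared sensitivities, I would take squared norms:
\begin{equation}
    \|\delta\widehat x_{0|t}\|^2=\frac{b_t^2}{\Delta_t^2}\|\delta v_t\|^2,
    \qquad
    \|\delta\widehat\epsilon_{|t}\|^2=\frac{a_t^2}{\Delta_t^2}\|\delta v_t\|^2.
\end{equation}
I would then normalize each by their sum. The common factor $\|\delta v_t\|^2/\Delta_t^2$ cancels, leaving $b_t^2/(a_t^2+b_t^2)$ and $a_t^2/(a_t^2+b_t^2)$. Finally, dividing numerator and denominator by $b_t^2$ (respectively $a_t^2$) and using $\ell(t)=\log(a_t^2/b_t^2)$ gives $1/(1+e^{\ell})=\operatorname{sigmoid}(-\ell)$ and $1/(1+e^{-\ell})=\operatorname{sigmoid}(\ell)$, matching \eqref{eq:logsnr_bases}.

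The only delicate point is interpretive rather than computational: I need to state precisely what ``normalized'' means, namely the fraction of total endpoint squared sensitivity. I also need to handle the endpoints where $a_t$ or $b_t$ vanishes, since $\ell(t)$ is infinite there. I would restrict to $t$ with $a_t,b_t\neq0$ and note that the expressions $b_t^2/(a_t^2+b_t^2)$ and $a_t^2/(a_t^2+b_t^2)$ extend continuously to the endpoints, which requires only $a_t^2+b_t^2>0$.
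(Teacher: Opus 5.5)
Your proposal is correct and follows the paper's own argument. Both invert the $2\times2$ affine system (whose determinant is $\Delta_t$), read off the $v_t$-coefficients $-b_t/\Delta_t$ and $a_t/\Delta_t$, and note that normalizing the squared sensitivities cancels the common factor $\Delta_t^{-2}$. Your extra care about what ``normalized'' means and about the endpoints where $a_t$ or $b_t$ vanishes is a useful refinement that the paper leaves implicit. Normalizing the scalar coefficients $b_t^2/\Delta_t^2$ and $a_t^2/\Delta_t^2$ directly, as the paper does, also avoids the trivial case $\delta v_t=0$.
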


\begin{proof}
Invert Equation~\eqref{eq:supp-affine-system}; normalization cancels the
shared factor $\Delta_t^{-2}$.
\end{proof}

For the linear path, define the normalized sensitivities
$\psi_{\rm prior}(t)=t^2/[t^2+(1-t)^2]$ and
$\psi_{\rm clean}(t)=(1-t)^2/[t^2+(1-t)^2]$. Coupling them to the spectral
deficits gives
\begin{equation}
    D_{\mathcal A}(t)
    =
    \alpha_{\rm miss}\psi_{\rm prior}(t)
    +
    \alpha_{\rm weak}\psi_{\rm clean}(t).
    \label{eq:supp-demand}
\end{equation}
Missing modes therefore shift demand toward the noise endpoint, whereas weak
modes shift it toward the clean endpoint.

\subsection{Trajectory error and allocation density}

Consider $\dot z(t)=f(t,z(t))$ and variable-step Euler times
$t_{\min}=t_0<\cdots<t_N=t_{\max}$, with $h_i=t_{i+1}-t_i$.

\begin{proposition}[Variable-step trajectory-error bound]
\label{prop:trajectory-error}
Assume $f(t,\cdot)$ is $L$-Lipschitz and
$\|\ddot z(t)\|_2\leq\kappa_i$ on $[t_i,t_{i+1}]$. Starting from the exact
initial state, Euler satisfies
\begin{equation}
    \|e_N\|_2
    \leq
    \frac12
    \exp[L(t_{\max}-t_{\min})]
    \sum_{i=0}^{N-1}\kappa_i h_i^2.
    \label{eq:discrete-global-bound}
\end{equation}
\end{proposition}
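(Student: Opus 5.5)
We follow the standard one-step error recursion for explicit Euler, keeping the step sizes and local curvature bounds indexed by interval. Write $z_i:=z(t_i)$ for the exact solution at the grid times, $\hat z_i$ for the Euler iterates $\hat z_{i+1}=\hat z_i+h_i f(t_i,\hat z_i)$, and $e_i:=\hat z_i-z_i$, so that $e_0=0$ by the exact-initial-state assumption.

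\emph{Local truncation error.} Since $\dot z(t)=f(t,z(t))$, Taylor's theorem with integral remainder gives
\begin{equation*}
z_{i+1}=z_i+h_i f(t_i,z_i)+\tau_i,
\qquad
\tau_i=\int_{t_i}^{t_{i+1}}(t_{i+1}-s)\,\ddot z(s)\,ds .
\end{equation*}
The hypothesis $\|\ddot z\|_2\le\kappa_i$ on $[t_i,t_{i+1}]$ then yields $\|\tau_i\|_2\le\tfrac12\kappa_i h_i^2$. Using the integral form avoids the mean-value theorem, which fails for vector-valued $z$.

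\emph{One-step recursion.} Subtracting the exact relation from the Euler update gives
\begin{equation*}
e_{i+1}=e_i+h_i\bigl[f(t_i,\hat z_i)-f(t_i,z_i)\bigr]-\tau_i .
\end{equation*}
By the $L$-Lipschitz property,
\begin{equation*}
\|e_{i+1}\|_2\le(1+Lh_i)\|e_i\|_2+\tfrac12\kappa_i h_i^2 .
\end{equation*}

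\emph{Unrolling.} By induction from $e_0=0$,
\begin{equation*}
\|e_N\|_2\le\sum_{i=0}^{N-1}\Bigl(\prod_{j=i+1}^{N-1}(1+Lh_j)\Bigr)\tfrac12\kappa_i h_i^2 .
\end{equation*}
Apply $1+x\le e^x$ to each factor. Each product is then bounded by $\exp\bigl(L\sum_{j>i}h_j\bigr)\le\exp[L(t_{\max}-t_{\min})]$, because the steps telescope to $\sum_j h_j=t_{\max}-t_{\min}$. Factoring out this uniform constant gives \eqref{eq:discrete-global-bound}.

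The argument is a routine discrete Grönwall estimate, and no step poses a real obstacle. Two bookkeeping points need care. First, the Lipschitz constant must be uniform in $t$, since a time-dependent constant would change the exponent. Second, the indexing here runs forward from $t_{\min}$, whereas the SAS schedule in \eqref{eq:osta_schedule} is indexed in reverse time. This is immaterial: integrating in reverse time is the same ODE argument with $t\mapsto t_{\max}+t_{\min}-t$, under which $h_i$, $\kappa_i$ and $L$ are unchanged. The payoff is that the bound is a weighted sum $\sum_i\kappa_i h_i^2$, which is what connects the error to an allocation density: it favours small $h_i$ where $\kappa_i$ is large.
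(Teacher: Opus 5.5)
Your proof is correct and follows the same route as the paper's: a Taylor bound of $\frac{1}{2}\kappa_i h_i^2$ on the local error, the Lipschitz one-step recursion $\|e_{i+1}\|_2\le(1+Lh_i)\|e_i\|_2+\frac{1}{2}\kappa_i h_i^2$, and unrolling with $1+Lh_j\le e^{Lh_j}$. You fill in the details the paper's two-line proof leaves implicit, namely the integral-remainder form, the explicit product formula, and the telescoping $\sum_j h_j=t_{\max}-t_{\min}$.
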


\begin{proof}
Taylor's theorem gives local error at most $\kappa_i h_i^2/2$.
Unrolling the Lipschitz recursion and using $1+Lh_i\leq e^{Lh_i}$ proves the
claim.
\end{proof}

For a normalized allocation density $q(t)>0$, locally
$h(t)\simeq[Nq(t)]^{-1}$, and Equation~\eqref{eq:discrete-global-bound}
approaches
\begin{equation}
    \frac1N
    \int_{t_{\min}}^{t_{\max}}
    \frac{\kappa(t)}{q(t)}\,dt.
    \label{eq:continuous-error-surrogate}
\end{equation}
For positive continuous $\kappa(t)$, Cauchy--Schwarz gives the unique
normalized density minimizing this surrogate:
\begin{equation}
    q^\star(t)
    =
    \frac{\sqrt{\kappa(t)}}
         {\int_{t_{\min}}^{t_{\max}}\sqrt{\kappa(u)}\,du}.
    \label{eq:optimal-density}
\end{equation}

SAS uses
\begin{equation}
    q_{\mathcal A,\lambda}(t)
    =
    \frac{1+\lambda D_{\mathcal A}(t)}
         {\int_{t_{\min}}^{t_{\max}}
          [1+\lambda D_{\mathcal A}(u)]\,du}.
    \label{eq:supp-sas-density}
\end{equation}
Modeling $\sqrt{\kappa_{\mathcal A}(t)}$ as
$C[1+\lambda D_{\mathcal A}(t)]$ makes SAS optimal for the bound in
Proposition~\ref{prop:trajectory-error}. Here $D_{\mathcal A}$ is an
operator-aware proxy rather than the exact local error of every host solver.

\subsection{Numerical schedule construction}
\label{sec:schedule-implementation}

We evaluate $q_{\mathcal A,\lambda}$ on an 8192-point grid. For $K=50$, the
cumulative distribution and reverse-time global quantiles are
\begin{equation}
    \begin{aligned}
    &F_{\mathcal A,\lambda}(t)
    =\int_{0.18}^{t}q_{\mathcal A,\lambda}(u)\,du,\\
    &t_j
    =F_{\mathcal A,\lambda}^{-1}
      \left(1-\frac{j}{K+1}\right),\quad j=1,\ldots,K.
    \end{aligned}
    \label{eq:supp-global-quantiles}
\end{equation}
The endpoints are not evaluations, and $\lambda=0$ recovers the uniform
schedule.

\section{Additional Analysis of Measurement-Prioritized Attention}
\label{sec:mpa-analysis}

\subsection{Conflict score used by the implementation}

At solver step $s$, let $\delta_s=z_s^{\rm dc}-z_s^{\rm pri}$ and
$h_s(p)=C^{-1}\sum_c|\delta_{s,c}(p)|$. With threshold $\tau$, scale
$v_{\max}$, and exponent $\gamma$,
\begin{equation}
    \bar c_s(p)
    =
    m_{\rm known}(p)
    \left[
    \operatorname{clip}
    \left(
    \frac{h_s(p)-\tau}{v_{\max}-\tau},
    0,1
    \right)
    \right]^\gamma.
    \label{eq:conflict-gate}
\end{equation}
Average pooling gives the final gate
\begin{equation}
    c_s
    =
    m_{\rm known}\odot
    \operatorname{clip}
    \left(
    \operatorname{AvgPool}_k(\bar c_s),0,1
    \right).
    \label{eq:smoothed-conflict}
\end{equation}
The resulting measurement-responsive gate is area-interpolated to each token
grid.

\subsection{Gated outer-product attention bias}

MPA uses $B_s=g_sc_s^\top$, with $g_s=\mathbf1$ for super-resolution and
deblurring and $g_s=1-m_{\rm known}$ for inpainting. Appending
$\sqrt{\beta_s\sqrt{d_h}}g_s$ and
$\sqrt{\beta_s\sqrt{d_h}}c_s$ to the queries and keys gives
\begin{equation}
    \frac{Q'_sK_s'^\top}{\sqrt{d_h}}
    =
    \frac{Q_sK_s^\top}{\sqrt{d_h}}
    +
    \beta_s g_sc_s^\top.
    \label{eq:exact-logit-bias}
\end{equation}
without materializing $B_s$ or disabling fused attention.

\subsection{MPA as a query-gated exponential tilt}

For original attention probabilities $p_{ij}=\operatorname{softmax}_j(L_{ij})$,
Equation~\eqref{eq:exact-logit-bias} gives
\begin{equation}
    \begin{aligned}
            \widetilde p_{ij}(\beta)
    &=
    \frac{p_{ij}\exp(\beta g_i c_j)}
         {Z_i(\beta)},
    \\
    Z_i(\beta)
    &=
    \sum_kp_{ik}\exp(\beta g_i c_k).
    \end{aligned}
    \label{eq:gated-tilt}
\end{equation}

Taking the ratio for keys $j$ and $k$ gives
\begin{equation}
    \frac{\widetilde p_{ij}(\beta)}
         {\widetilde p_{ik}(\beta)}
    =
    \frac{p_{ij}}{p_{ik}}
    \exp[\beta g_i(c_j-c_k)].
    \label{eq:gated-odds}
\end{equation}
Thus larger $c_j$ increases relative attention when $\beta g_i>0$, whereas
$g_i=0$ leaves the row unchanged. In inpainting, only missing-region queries
prioritize high-conflict observed keys.

Let $\mu_i(\beta)=\mathbb E_{\widetilde p_i(\beta)}[c]$. Direct
differentiation gives
\begin{equation}
    \frac{d\mu_i(\beta)}{d\beta}
    =
    g_i\operatorname{Var}_{\widetilde p_i(\beta)}(c)
    \geq0.
    \label{eq:mean-conflict-monotonic}
\end{equation}
so increasing $\beta$ monotonically raises the expected attended conflict
score for gated queries.

\subsection{Conflict correction in FLAIR}

Using FLAIR as the host solver, MPA records the native posterior correction
\begin{equation}
    \delta_s^{\rm FLAIR}
    =
    z_s^{\rm after\ data}
    -
    z_s^{\rm after\ regularizer}.
    \label{eq:flair-delta}
\end{equation}
between its data and regularizer updates. For
$f(z)=\frac12\|\mathcal Az-y\|_2^2$, a gradient correction gives
\begin{equation}
    \delta
    =
    -\eta\mathcal A^\top(\mathcal Az^{\rm pri}-y)
    \in\operatorname{range}(\mathcal A^\top)
    =
    \operatorname{null}(\mathcal A)^\perp.
    \label{eq:conflict-range}
\end{equation}
For inpainting this correction is supported only on observed coordinates;
Equation~\eqref{eq:conflict-gate} selects this measurement-anchored evidence,
and Equation~\eqref{eq:exact-logit-bias} directs it through attention.

\section{Implementation Details}
\label{sec:implementation}

\subsection{Common evaluation protocol}

All reconstructions are produced at $768\times768$ resolution. FFHQ uses the
fixed prompt ``A high quality photo of a face.'' DIV2K uses an image-specific
description extracted from the observation and prefixed by ``A high quality
photo of,'' following the main paper. Images, captions, measurement
realizations, and seeds are matched across every controlled comparison.
Seeds are 3 for super-resolution and 42 for motion deblurring and inpainting.
The measurement-noise level and metric definitions are identical to those in
the main paper.

The local FLAIR and FlowLPS implementations use Stable Diffusion
3.5-Medium, the TAESD3 lightweight autoencoder, a
classifier-free-guidance scale of 2, and an empty negative prompt. Computation
uses half-precision model weights and bfloat16 inference states. The FlowDPS
and FlowChef protocol below follows the matched Stable Diffusion 3.5-Medium
adaptation reported by FLAIR.

The super-resolution operators use bicubic downsampling with factors 8 and
12. Motion deblurring uses a kernel size of 61. FFHQ box inpainting removes
rows $128{:}640$ and columns $384{:}640$, corresponding to a missing fraction
of $2/9$. DIV2K inpainting uses the same fixed two-rectangle mask for every
method. All methods use the same preprocessed targets and observations.

\subsection{Host-solver configurations}

We integrate SAS and MPA into FLAIR and FlowLPS without changing their
within-step update rules.

\paragraph{FLAIR.}
We use a regularizer SGD learning rate of 1 and 15 data-term steps at every
flow time. The data-term learning rate is 12 for $8\times$ and $12\times$
super-resolution and 0.1 for motion deblurring and inpainting. Data-term
optimization stops when its loss falls below $10^{-4}$ times the number of
measurements. The regularizer scale is 0.5 with the released time-dependent
calibration. The regularizer and data term are optimized sequentially; the
conflict correction in Equation~\eqref{eq:flair-delta} is recorded between
these two states.

\paragraph{FlowLPS.}
For $8\times$ and $12\times$ super-resolution, we use 4 Langevin steps and
11 proximal steps, with initial proximal learning rate 0.5 multiplied by
0.85 every five proximal iterations. Motion deblurring uses 6 Langevin steps,
9 proximal steps, and a fixed proximal learning rate of 0.1. Inpainting uses
4 Langevin steps and 11 proximal steps, with initial proximal learning rate
0.1 multiplied by 0.65 every ten proximal iterations. All tasks use a
Langevin step size of $10^{-4}$, one pCN step, and variance $s_t^2=t$.
The model predicts
\begin{equation}
    \widehat z_{0|t}=z_t-tv_\theta(z_t,t),
    \qquad
    \widehat z_{1|t}=z_t+(1-t)v_\theta(z_t,t).
\end{equation}
The solver applies its Langevin, proximal, and single pCN updates, and
records the difference between the proximal and prior endpoint estimates
after the proximal update.
The next state is
\begin{equation}
    z_{t'}
    =
    (1-t')z_{0|t}^{\rm prox}
    +
    t'z_{1|t}^{\rm refreshed}.
\end{equation}

\subsection{SAS parameters}

All SAS experiments use 50 solver times, $t_{\min}=0.18$, $t_{\max}=1$,
$\lambda=1$, an 8192-point numerical grid, and spectral null threshold
$10^{-10}$. The numerical construction is detailed in
Section~\ref{sec:schedule-implementation}. Super-resolution uses
\begin{equation}
    \alpha_{\rm miss}=1-\frac1{s^2},
    \qquad
    \alpha_{\rm weak}=0
\end{equation}
for scale $s$. Inpainting reads the missing fraction directly from the mask.
The super-resolution expression is the ideal rank-deficit approximation: it
treats the $1/s^2$ retained samples as observed modes and does not assign a
separate weak-mode term to the bicubic prefilter. This avoids making the
schedule depend on padding and boundary conventions while retaining the
dominant null-space fraction.
For motion deblurring, the singular values are the magnitudes of the
two-dimensional discrete Fourier transform of the normalized length-61
kernel at image resolution. This is the circulant spectral approximation
used only to construct the schedule; it does not alter the forward operator
used for evaluation. Each singular value is divided by its maximum before
applying Equation~\eqref{eq:supp-spectral-weights}.

\subsection{MPA parameters}

Table~\ref{tab:mpa-settings} gives all task-dependent settings. Shared
settings are $\gamma=0.7$, no exponential moving average, gate updates after
completed solver steps 2 through 35, all transformer layers, and application
only to the conditional half of classifier-free guidance. A gate computed
after one solver step is consumed by the following transformer evaluation.
The same settings are used for both FLAIR and FlowLPS.

\begin{table}[t]
    \centering
    \small
    \setlength{\tabcolsep}{2.5pt}
    \begin{tabular}{lccccc}
        \toprule
        Task & $\beta$ & $\tau$ & $v_{\max}$ & Pool & Query gate\\
        \midrule
        SR $8\times$/$12\times$ & 2.0 & 0.15 & 1.0 & 17 & all\\
        Motion deblur           & 0.5 & 0.02 & 0.5 & 17 & all\\
        Inpainting              & 4.0 & 0.15 & 1.0 & 9 & missing\\
        \bottomrule
    \end{tabular}
    \caption{Task-dependent MPA parameters. ``Pool'' is the side length of
    the average-pooling kernel in latent space.}
    \label{tab:mpa-settings}
\end{table}

\subsection{Baseline configurations}

For both baselines, we use Stable Diffusion 3.5-Medium and the same task
definitions, observations, captions, and seeds as the host solvers.

\paragraph{FlowDPS.}
We use the standard FlowDPS implementation adapted by FLAIR, with a
classifier-free-guidance scale of 2 and 50 NFEs. The update step size is 15
for inpainting and 10 for the other tasks.

\paragraph{FlowChef.}
We use the FLAIR implementation with a classifier-free-guidance scale of 2
and update step size 1. FlowChef uses 50 NFEs for super-resolution and motion
deblurring and 200 NFEs for inpainting.

\section{Additional Experiments}
\label{sec:add-exp}

\subsection{Operator spectra and schedules}
\label{sec:operator-profiles}

Table~\ref{tab:operator-profiles} reports the normalized spectra and the
resulting SAS coefficients. These quantities depend only on the measurement
operator and require no flow-model evaluation.

\begin{table*}[t]
    \centering
    \small
    \setlength{\tabcolsep}{2.8pt}
    \begin{tabular}{lccccc}
        \toprule
        Operator
        & Normalized squared spectrum
        & $r/d$
        & $\operatorname{srank}(\mathcal A)/d$
        & $\alpha_{\rm miss}$
        & $\alpha_{\rm weak}$\\
        \midrule
        SR $8\times$
        & $1\ (d/64),\ 0\ (63d/64)$
        & 0.01563 & 0.01563 & 0.98438 & 0\\
        SR $12\times$
        & $1\ (d/144),\ 0\ (143d/144)$
        & 0.00694 & 0.00694 & 0.99306 & 0\\
        Motion deblur
        & $|\mathcal Fk_{61}|^2/\|\mathcal Fk_{61}\|_\infty^2$
        & 1.00000 & 0.01639 & 0 & 0.98361\\
        Inpainting (FFHQ)
        & $1\ (7d/9),\ 0\ (2d/9)$
        & 0.77778 & 0.77778 & 0.22222 & 0\\
        \bottomrule
    \end{tabular}
    \caption{Normalized spectral statistics used by SAS at
    $768\times768$ resolution. Super-resolution uses the ideal rank model,
    motion deblurring uses the circulant spectrum of the length-61 kernel,
    and inpainting uses the fixed FFHQ box mask.}
    \label{tab:operator-profiles}
\end{table*}

\paragraph{Super-resolution.}
For scale $s$, SAS uses the ideal rank surrogate
\begin{equation}
    \mathcal A_{{\rm SR},s}
    =\bigl(I_{d/s^2}\otimes h_s^\top\bigr)P_s,
    \qquad h_s=s^{-2}\mathbf 1_{s^2},
\end{equation}
where $P_s$ groups nonoverlapping $s\times s$ patches. Since $h_s^\top$ has
one singular value $1/s$, the normalized spectrum contains $d/s^2$ ones and
$d(1-1/s^2)$ zeros. Hence
$(\alpha_{\rm miss},\alpha_{\rm weak})=(1-1/s^2,0)$.

\paragraph{Motion deblurring.}
Under the circular approximation, the length-61 line-kernel surrogate is
Fourier diagonal:
\begin{equation}
    \mathcal A_{\rm blur}
    =\mathcal F^*\operatorname{diag}(\widehat k_{61})\mathcal F,
    \qquad
    \bar a_\omega^2
    =\frac{|\widehat k_{61}(\omega)|^2}
    {\|\widehat k_{61}\|_\infty^2}.
\end{equation}
All sampled magnitudes are nonzero, so $r=d$; Parseval's identity gives
$\operatorname{srank}(\mathcal A)/d=1/61$. Therefore
$(\alpha_{\rm miss},\alpha_{\rm weak})=(0,60/61)$.

\paragraph{Inpainting.}
If $P_m$ moves the $r$ observed pixels first, the exact decomposition is
\begin{equation}
    \mathcal A_{\rm inp}
    =\begin{bmatrix}I_r&0\end{bmatrix}P_m.
\end{equation}
The spectrum thus has $r$ ones and $d-r$ zeros. The fixed FFHQ box removes
$512\times256/768^2=2/9$ of the pixels, giving
$(\alpha_{\rm miss},\alpha_{\rm weak})=(2/9,0)$.

Accordingly, Figure~\ref{fig:operator-schedules} allocates SR and inpainting
by missing modes, with inpainting closer to uniform because its coefficient
is smaller, while motion deblurring is driven by weak modes.

\begin{figure}[t]
    \centering
    \includegraphics[width=\linewidth]
    {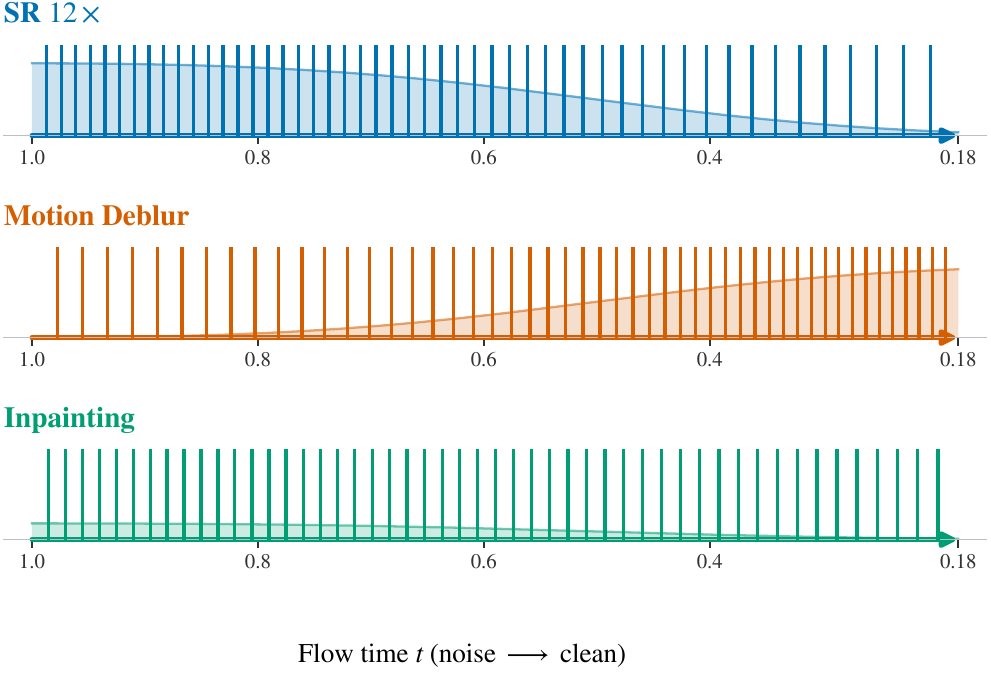}
    \caption{Global SAS schedules for SR $12\times$, motion deblurring, and
    inpainting. Each vertical mark denotes one of the 50 solver times; colors
    distinguish the degradation operators, and the translucent profile shows
    the normalized local density of the displayed times. Flow time proceeds
    from noise to clean.}
    \label{fig:operator-schedules}
\end{figure}

\subsection{Validating the data-consistency deficit}

This diagnostic tests the first link in SAS: operator spectrum
$\rightarrow$ spectral coefficients. It verifies that
$\alpha_{\rm miss}+\alpha_{\rm weak}$ measures the average error retained by
one normalized data-consistency update, giving the coefficients an
operational meaning. It does not evaluate schedule quality or reconstruction
quality; those are tested by the schedule ablation. For an isotropic error
$u$, a larger retained residual means that data consistency leaves more work
to the flow prior. The measured coefficients subsequently weight the two
temporal bases in $D_{\mathcal A}(t)$. We define
\begin{equation}
    R_{\rm lin}(u)
    =
    \frac{u^\top(I-G_{\mathcal A})u}{\|u\|_2^2},
    \qquad
    R_{\rm sq}(u)
    =
    \frac{\|(I-G_{\mathcal A})u\|_2^2}{\|u\|_2^2}.
    \label{eq:dc-retention-ratios}
\end{equation}
The expectations are
$d^{-1}\operatorname{tr}(I-G_{\mathcal A})$ and
$d^{-1}\operatorname{tr}[(I-G_{\mathcal A})^2]$, respectively. We draw 100
Gaussian errors at $768\times768$ resolution and evaluate the update in the
right-singular basis, without a flow model or decoder.

\begin{table*}[t]
    \centering
    \small
    \setlength{\tabcolsep}{3.2pt}
    \begin{tabular}{lcccc}
        \toprule
        Operator
        & \multicolumn{2}{c}{$R_{\rm lin}$}
        & \multicolumn{2}{c}{$R_{\rm sq}$}\\
        \cmidrule(lr){2-3}\cmidrule(lr){4-5}
        & Theory & Measured & Theory & Measured\\
        \midrule
        SR $8\times$
        & 0.98438 & $0.98439{\pm}0.00023$
        & 0.98438 & $0.98439{\pm}0.00023$\\
        Motion deblur
        & 0.98361 & $0.98361{\pm}0.00019$
        & 0.97814 & $0.97815{\pm}0.00023$\\
        Inpainting
        & 0.22222 & $0.22224{\pm}0.00081$
        & 0.22222 & $0.22224{\pm}0.00081$\\
        \bottomrule
    \end{tabular}
    \caption{Theory and numerical measurement of the residual after one
    normalized data-consistency update. Values are means and standard
    deviations over 100 isotropic errors. SR uses the ideal rank model,
    motion deblurring uses the circulant spectrum, and inpainting uses the
    FFHQ box mask.}
    \label{tab:dc-residual-check}
\end{table*}

\subsection{MPA spatial localization and bias strength}

We use one 100-image FFHQ box-inpainting experiment to distinguish spatially
meaningful conflict guidance from a generic increase in attention bias, and
to examine sensitivity to the bias strength $\beta$. All variants use
FLAIR, Global SAS with $\lambda=1$, 50 NFEs, $\tau=0.15$, $v_{\max}=1$,
$\gamma=0.7$, a $9\times9$ pooling kernel, missing-region queries, completed
solver steps 2--35, and all transformer layers.

At the default $\beta=4$, we compare:
\begin{enumerate}
    \item \textbf{No MPA}: the attention bias is disabled.
    \item \textbf{Correct}: use the conflict heatmap in
    Equation~\eqref{eq:smoothed-conflict}.
    \item \textbf{Shuffled}: deterministically permute the heatmap values
    within the known region for each image and active step, preserving its
    histogram and total mass but destroying spatial correspondence.
    \item \textbf{Uniform}: replace all known-region values by their spatial
    mean,
    \begin{equation}
        c_s^{\rm uni}
        =
        m_{\rm known}
        \frac{\sum_jc_{s,j}}{\sum_jm_{{\rm known},j}},
        \label{eq:uniform-heatmap}
    \end{equation}
    preserving support and total heatmap mass while removing localization.
\end{enumerate}
The spatial controls test whether MPA benefits from where the conflict occurs,
rather than merely from adding a positive key-side bias. We separately
evaluate bias-strength sensitivity with the correct heatmap for
$\beta\in\{0,2,3,4,6\}$. The $\beta=0$ endpoint is equivalent to disabling
MPA; all positive values use the same heatmap,
threshold, normalization, pooling, query gate, active steps, and layers.
Figure~\ref{fig:mpa-beta-sensitivity} reports all four restoration metrics as
functions of $\beta$, keeping the strength analysis separate from the spatial
controls.

\begin{figure}[t]
    \centering
    \IfFileExists{Figures/supp_mpa_beta_sensitivity.pdf}{
        \includegraphics[width=\linewidth]
        {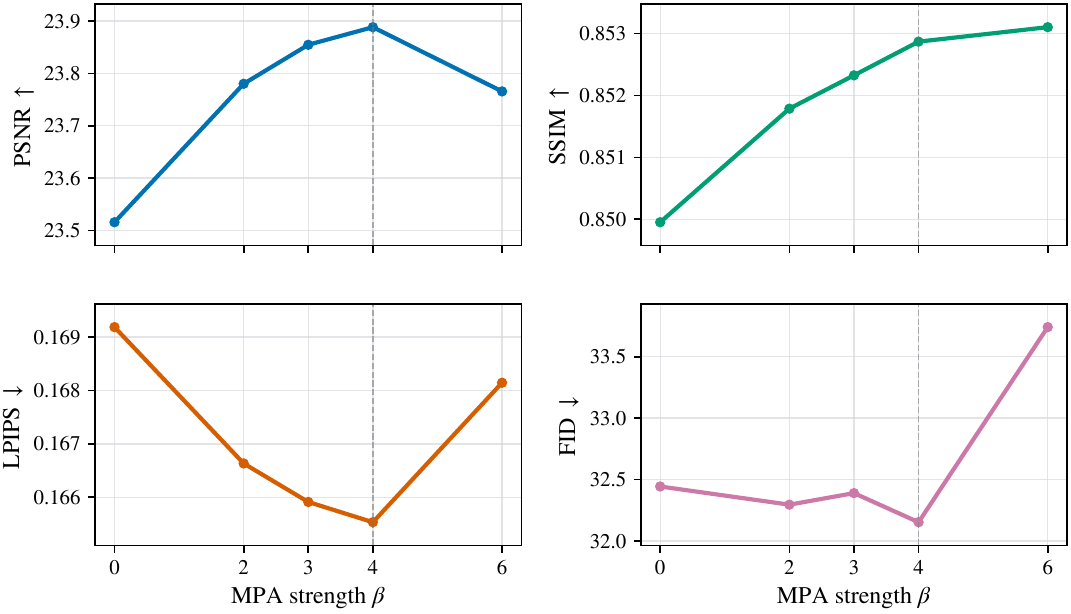}
    }{
        \fbox{\rule{0pt}{1.35in}\rule{0.96\textwidth}{0pt}}
    }
    \caption{MPA bias-strength sensitivity with the correct heatmap on 100
    FFHQ box-inpainting images using FLAIR and Global SAS with $\lambda=1$.
    The four panels report PSNR, SSIM, LPIPS, and FID over
    $\beta\in\{0,2,3,4,6\}$; $\beta=0$ is the no-MPA endpoint.}
    \label{fig:mpa-beta-sensitivity}
\end{figure}

\begin{table}[h]
    \centering
    \small
    \setlength{\tabcolsep}{4.5pt}
    \begin{tabular}{lccccc}
        \toprule
        Heatmap & $\beta$
        & PSNR $\uparrow$ & SSIM $\uparrow$
        & LPIPS $\downarrow$ & FID $\downarrow$\\
        \midrule
        None     & 0 & 23.52 & 0.8500 & 0.1692 & 32.44\\
        Correct  & 4 & \textbf{23.89} & \textbf{0.8529} & 0.1655 & 32.15\\
        Shuffled & 4 & 23.88 & 0.8528 & \textbf{0.1654} & 32.41\\
        Uniform  & 4 & 23.83 & 0.8526 & 0.1660 & \textbf{32.08}\\
        \bottomrule
    \end{tabular}
    \caption{MPA spatial controls for FLAIR on 100 FFHQ box-inpainting
    images. Correct, shuffled, and uniform heatmaps all use $\beta=4$ and
    preserve the remaining MPA settings. Bold denotes the best result in each
    column.}
    \label{tab:mpa-heatmap-controls}
\end{table}

The correct heatmap gives the highest PSNR and SSIM, while the shuffled and
uniform controls are close and give the best LPIPS and FID, respectively.
Thus spatially matched conflict guidance is most evident in the fidelity
metrics under this 100-image protocol. In the strength sweep, $\beta=4$
achieves the best PSNR, LPIPS, and FID; the small further SSIM gain at
$\beta=6$ is accompanied by worse values for the other three metrics.

\begin{figure}[h]
    \centering
    \includegraphics[width=\linewidth]
    {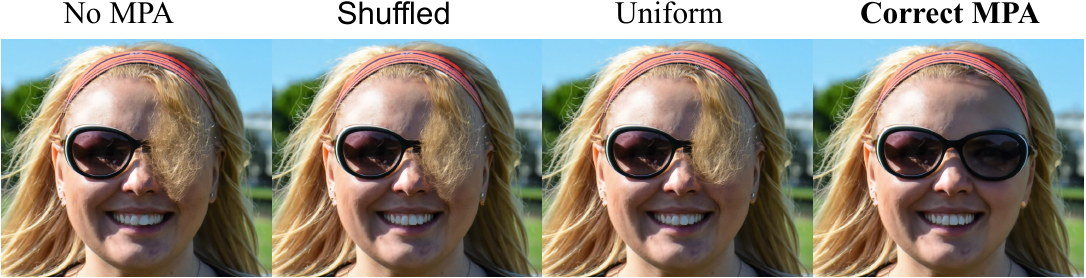}
    \caption{Layout for the MPA spatial-control comparison. The final figure
    should show the key-side heatmap, reconstruction, and enlarged
    missing-region crop for the same input and seed. This placeholder must be
    replaced with experimental outputs before submission.}
    \label{fig:mpa-heatmap-controls}
\end{figure}

\subsection{Runtime, memory, and NFE accounting}

Table~\ref{tab:efficiency-accounting} separates computational overhead from
reconstruction quality. We benchmark FLAIR with batch size one on SR
$8\times$, SR $12\times$, motion deblurring, and box inpainting using an
NVIDIA RTX 4090. For each setting, runtime is averaged over the same 10 FFHQ
images. Model loading, image I/O, and metric computation are excluded; CUDA
is synchronized before and after each reconstruction, and three warm-up
trials are discarded. Peak allocated memory is reset before every measured
image, and we report the maximum of the 10 per-image peaks. All variants use
the same 50 flow-model evaluations.

Across all four tasks, SAS changes runtime by only $-0.02$ to $+0.14$ seconds
per image and leaves peak memory unchanged at the reported precision. The
measurable overhead comes from MPA: Full adds $3.44$--$3.60$ seconds per image
and approximately $0.54$ GiB of peak memory, while preserving the 50-NFE
budget. The small signed differences for SAS are within the run-to-run
variation of this timing protocol.

\subsection{Additional Inpainting Results}

Figure~\ref{fig:additional-inpainting-results} provides further qualitative
evidence for the inpainting behavior discussed in the main paper. Under
matched observations, seeds, and the same 50-NFE budget, the proposed
components more consistently preserve instance-specific semantic and
structural cues across the masked boundary, rather than allowing the
completion to drift toward a generic solution favored by the prior. These
examples reinforce that the improvement extends beyond pixel fidelity to
semantic and structural consistency within the missing region.

\begin{figure*}[t]
    \centering
    \IfFileExists{Figures/supp_inpainting_additional_examples.pdf}{
        \includegraphics[width=\textwidth]
        {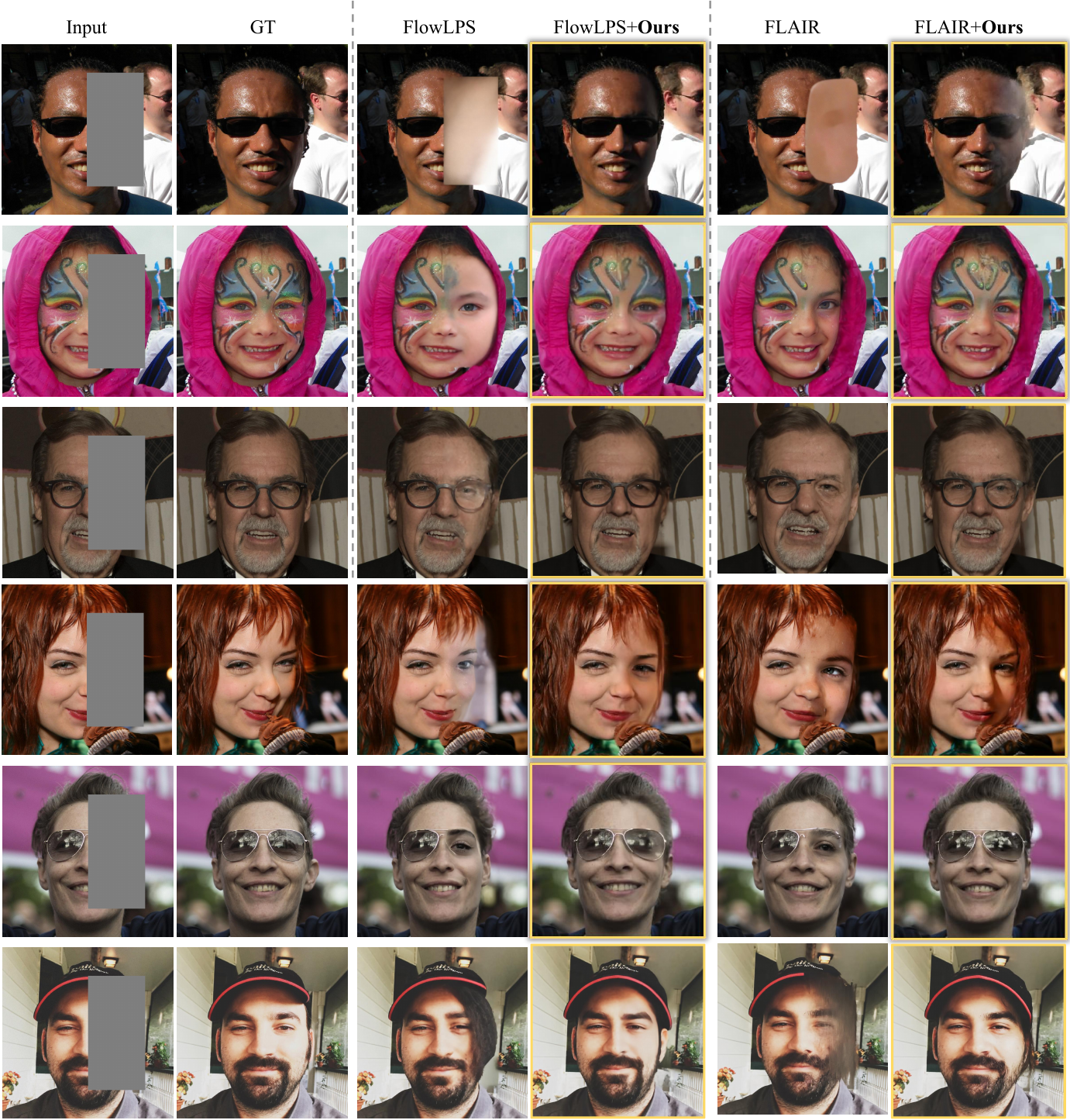}
    }{
        \fbox{\rule{0pt}{2.2in}\rule{0.96\textwidth}{0pt}}
    }
    \caption{Additional qualitative results on FFHQ box inpainting. Under the
    same observations, seeds, and NFE budget, the proposed components more
    consistently preserve instance-specific semantic and structural cues
    across the masked boundary, reinforcing the inpainting observations in the
    main paper.}
    \label{fig:additional-inpainting-results}
\end{figure*}

\subsection{Sensitivity to the SAS strength $\lambda$}

We examine the density-strength parameter using FLAIR on 100 FFHQ
$8\times$ super-resolution images. All variants use the same images,
measurements, seeds, 50-NFE budget, and numerical schedule construction, with
MPA disabled. We vary only $\lambda\in\{0,0.5,1,2,4\}$ in
$1+\lambda D_{\mathcal A}(t)$ and hold all host-solver settings fixed.
Because these are test images, we report sensitivity rather than use them for
hyperparameter selection.

\begin{table}[h]
    \centering
    \small
    \setlength{\tabcolsep}{3.6pt}
    \begin{tabular}{ccccc}
        \toprule
        $\lambda$ & PSNR $\uparrow$ & SSIM $\uparrow$
        & LPIPS $\downarrow$ & FID $\downarrow$\\
        \midrule
        0   & 29.67 & 0.8067 & 0.4481 & \textbf{116.77}\\
        0.5 & 29.74 & 0.8094 & 0.4407 & 116.79\\
        1   & 29.80 & 0.8116 & 0.4348 & 117.28\\
        2   & 29.85 & 0.8134 & 0.4278 & 117.04\\
        4   & \textbf{29.93} & \textbf{0.8163} & \textbf{0.4190} & 117.14\\
        \bottomrule
    \end{tabular}
    \caption{Sensitivity to the SAS strength on 100 FFHQ
    $8\times$ super-resolution images using FLAIR, 50 NFEs, and no MPA.
    Bold denotes the best result in each column.}
    \label{tab:lambda-sensitivity}
\end{table}

Increasing $\lambda$ steadily improves PSNR, SSIM, and LPIPS over the tested
range, while FID varies by only 0.51 without a monotonic trend. These results
show that stronger operator-aware allocation has a consistent but gradual
effect on the per-image metrics rather than introducing an abrupt operating
point.

\subsection{Statistical reporting for 100-image experiments}

For each comparison, we form 100 paired per-image differences using the same
images, measurements, and seeds. We resample these 100 pairs with replacement
10,000 times (seed 2027) and report the observed mean difference with the
2.5th and 97.5th percentiles as a paired 95\% bootstrap confidence interval.
For PSNR and SSIM, the difference is Ours minus baseline; for LPIPS, it is
baseline minus Ours, so positive values always indicate improvement. An
interval excluding zero is treated as significant at the 5\% level. FID is
reported only as a set-level metric because 100 samples do not provide a
reliable per-image significance test.

\section{Computational Overhead}

SAS is computed once per operator and does not evaluate the flow model. Its
spectral calculations are analytic for super-resolution and inpainting and
use one FFT of the blur kernel for motion deblurring. The one-dimensional
schedule construction is performed on an 8192-point grid.

MPA reuses the prior and data-consistent estimates already produced by the
host solver. Constructing the conflict gate costs $O(nd_z)$ for $n$ image
tokens and latent width $d_z$. Equation~\eqref{eq:exact-logit-bias} adds one
feature to each query and key, changing a head's dot-product cost from
$O(n^2d_h)$ to $O[n^2(d_h+1)]$. It does not materialize the outer product,
does not add a transformer call, and introduces no additional flow-model
evaluations. Both components preserve the fixed NFE budget.
\begin{table}[h]
    \centering
    \small
    \setlength{\tabcolsep}{3.5pt}
    \begin{tabular}{lllc}
        \toprule
        Task & Variant
        & Time ($\Delta$Time)
        & GPU memory\\
        \midrule
        SR $8\times$ & Base   & 23.79           & 15.89\\
        SR $8\times$ & $+$SAS & 23.78 {\scriptsize(0.00)}    & 15.89\\
        SR $8\times$ & $+$MPA & 27.23 {\scriptsize($+3.44$)} & 16.43\\
        SR $8\times$ & Full   & 27.24 {\scriptsize($+3.45$)} & 16.43\\
        \midrule
        SR $12\times$ & Base   & 23.81           & 15.89\\
        SR $12\times$ & $+$SAS & 23.78 {\scriptsize($-0.02$)} & 15.89\\
        SR $12\times$ & $+$MPA & 27.20 {\scriptsize($+3.39$)} & 16.43\\
        SR $12\times$ & Full   & 27.25 {\scriptsize($+3.44$)} & 16.43\\
        \midrule
        Motion deblur & Base   & 32.27           & 15.88\\
        Motion deblur & $+$SAS & 32.37 {\scriptsize($+0.10$)} & 15.88\\
        Motion deblur & $+$MPA & 35.89 {\scriptsize($+3.63$)} & 16.42\\
        Motion deblur & Full   & 35.87 {\scriptsize($+3.60$)} & 16.42\\
        \midrule
        Inpainting & Base   & 22.70           & 15.89\\
        Inpainting & $+$SAS & 22.84 {\scriptsize($+0.14$)} & 15.89\\
        Inpainting & $+$MPA & 26.08 {\scriptsize($+3.38$)} & 16.43\\
        Inpainting & Full   & 26.14 {\scriptsize($+3.44$)} & 16.43\\
        \bottomrule
    \end{tabular}
    \caption{Runtime and peak allocated GPU memory for FLAIR on 10 FFHQ
    images per setting. Times are seconds per image; each parenthesized change
    is measured relative to the corresponding Base row. Full denotes SAS+MPA,
    and all variants use 50 NFEs.}
    \label{tab:efficiency-accounting}
\end{table}

\section{Limitations of the Analysis}

The trajectory result in Proposition~\ref{prop:trajectory-error} is a
first-order discretization bound and does not prove that the proposed demand
equals the exact local error of a nonlinear latent inverse solver. The link
to SAS is conditional on an explicit operator-aware difficulty model.
Likewise, the MPA analysis characterizes the attention redistribution but
does not by itself guarantee improvement in the final decoded
image. Their role is to establish the direction, selectivity, and stability
of the intervention; the controlled experiments evaluate whether those
properties improve restoration in practice.

\section{Inpainting Failure Case}

The remaining failures tend to occur near complex mask boundaries where
multiple visually similar contours overlap or intersect. Because the missing
region removes the connections between observed contour fragments, several
continuations may remain plausible, making it difficult to associate each
fragment with its correct counterpart. The full method may consequently merge
or misconnect these contours even when the overall completion remains
semantically plausible. Figure~\ref{fig:inpainting-failure-case} shows a
representative example of this boundary ambiguity.

\begin{figure*}[t]
    \centering
    \IfFileExists{Figures/supp_inpainting_failure_case.pdf}{
        \includegraphics[width=\textwidth]
        {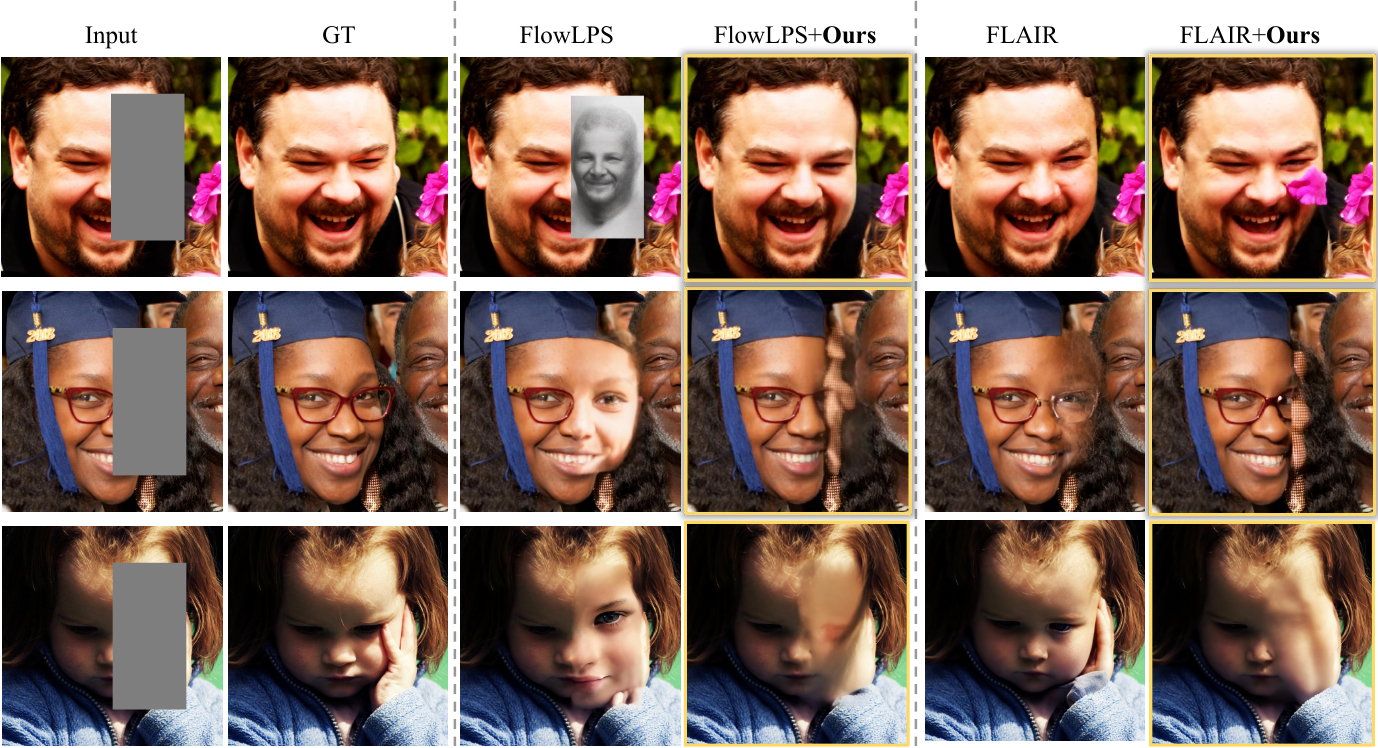}
    }{
        \fbox{\rule{0pt}{1.45in}\rule{0.96\textwidth}{0pt}}
    }
    \caption{Qualitative failure case on FFHQ box inpainting. Columns show
    the target, masked observation, host baseline, and full method for the
    same image and seed. When several similar contours overlap near the masked
    boundary, the full method can produce a plausible but incorrect contour
    association inside the missing region.}
    \label{fig:inpainting-failure-case}
\end{figure*}

\ifdefined\ARXIVCOMBINED
\else
\end{document}
\fi

\end{document}